\newcommand\sep{\mathrm{sep}}
\newcommand\diam{\mathrm{diam}}
\newcommand{\beq}[1]{\begin{equation}\label{#1}}
\newcommand{\eeq}{\end{equation}}
\def\compcirc {\mbox{\hspace{.05cm}}\raisebox{.04cm}{\tiny  {$\circ$ }}}
\newcommand\R{\mathbb{R}}
\newcommand\N{\mathbb{N}}
\newcommand\dmtwo[1]{\left(\begin{smallmatrix}
			0 & {#1}\\
     			{#1} & 0
                     \end{smallmatrix}\right)}
\newcommand\dmthree[3]{\left(\begin{smallmatrix}
			0 & {#1}&{#2}\\
     			{#1} & 0 & {#3}\\
			{#2} &  {#3}& 0
                     \end{smallmatrix}\right)}
\newcommand \cfunc{{\mathfrak C}}
\newcommand \hfunc{{\mathfrak H}}
\newcommand \rfunc{{\mathfrak R}}
\newcommand \func[1]{{\mathfrak #1}}
\newcommand \tfunc{{\mathfrak T}}
\newcommand \ifunc{{\mathfrak I}}
\newcommand\ob{\mathrm{ob}}
\newcommand\mor{\mathrm{Mor}}
\newcommand\im{\mathrm{Im}}
\newcommand \miso{\underline{\mathcal M}^{iso}}
\newcommand \mmon{\underline{\mathcal M}^{inj}}
\newcommand \mgen{\underline{\mathcal M}^{gen}}
\newcommand \many{\underline{\mathcal M}}
\newcommand \catc{\underline{\mathcal C}}
\newcommand \catp{\underline{\mathcal P}}
\newcommand \cat[1]{\underline{\mathcal {#1}}}
\newcommand \catr{\underline{\mathbb E}}
\newtheorem{corollary}{Corollary}[section]
\newtheorem{theorem}{Theorem}[section]
\newtheorem{definition}{Definition}[section]
\newtheorem{example}{Example}[section]
\newtheorem{lemma}{Lemma}[section]
\newtheorem{claim}{Claim}[section]
\newtheorem{remark}{Remark}[section]
\begin{document} 

\title{Classifying clustering schemes}

\author{Gunnar Carlsson   and   Facundo M\'emoli}
\address{Department of Mathematics, Stanford University,
Stanford, CA 94305}
\thanks{This work is supported by DARPA grant HR0011-05-1-0007 and  ONR grant N00014-09-1-0783.}
\email{$\{$gunnar,memoli$\}$@math.stanford.edu}
\date{November 2010}

\maketitle

\begin{abstract} 
Many clustering schemes are defined by optimizing an objective
function defined on the partitions of the underlying set of a finite metric space.
 In this paper, we construct a framework for studying what happens when we instead impose various
structural conditions on the clustering schemes, under the general heading of \emph{functoriality}. 

Functoriality refers to the idea that one
should be able to compare the results of clustering algorithms as one
varies the data set, for example by adding points or by applying
functions to it.  We show that within this framework, one can prove a
theorems analogous to one of J. Kleinberg \cite{kleinberg}, in which for example
one obtains an existence and uniqueness theorem instead of a
non-existence result.

We obtain a full
classification of all clustering schemes satisfying a condition we refer
to as excisiveness.  The classification can be changed by varying the
notion of maps of finite metric spaces. The conditions occur naturally
when one considers clustering as the statistical version of the geometric
notion of connected components.  By varying the degree of functoriality that one requires from the schemes it is possible to construct richer families of clustering schemes that exhibit sensitivity to density.
\end{abstract}

\section{Introduction}
Clustering techniques play a very central role in various parts of
data analysis.  This type of methods  can give important clues to the structure of data
sets, and therefore suggest results and hypotheses in the underlying
science.  There are many interesting methods of clustering available,
which have been applied to good effect in dealing with many datasets
of interest, and are regarded as important methods in exploratory
data analysis.

Most methods begin with a data set equipped with a notion of distance or metric.  Starting with this input, a method might select a partition of the data set as one which optimizes a choice of an objective function.  Other methods such as single, average, and complete linkage define clusterings, or rather nested families of clusterings, using a \emph{linkage function} defined between clusters.

Desirable properties of clustering algorithms come from practitioners who have intuitive notions of what is a good clustering: ``they know it when they see it''. This is of course not satisfactory and a theoretical understanding needs to be developed. However, one thing this intuition reflects is the fact that density needs to be incorporated in the clustering procedures. Single linkage clustering, a procedure that enjoys several nice theoretical properties, is notorious for its insensitivity to density, which is manifested in the so called \emph{chaining effect}  \cite{lance67general}.  

Other methods such as average linkage, complete linkage \cite[Chapter 3]{clusteringref} and $k$-means share the property that they exhibit some sort of sensitivity to density, but are unstable in a sense which has been made theoretically precise \cite{clust-um} and are therefore not well supported by theory. We believe that this disconnect between theory and practice should not exist, and in particular, in \cite{multi-param} we have constructed a theoretically sound framework that incorporates density via the use of $2$-dimensional persistence ideas. 

 Given that there is relatively little theory surrounding clustering methods, the subject is regarded by many as consisting of a collection of ad hoc methods for which it is difficult to generate a coherent picture \cite{science_art}.  One exception to this rule is a paper of J. Kleinberg \cite{kleinberg}, in which he proves a theorem which demonstrates the impossibility of constructing a clustering algorithm with three seemingly plausible properties. Kleinberg regards a clustering scheme as a map $\cfunc$ that assigns each finite metric space $(X,d_X)$ one of its partitions. His ``axioms'' are \begin{itemize}
\item \emph{Scale invariance}: $\cfunc(X,d_X)=\cfunc(X,\lambda\cdot d_X)$ for all $\lambda>0$;
\item  \emph{Surjectivity}: for all partitions $P$ of $X$ there exists a metric $d_X$ on $X$ with $\cfunc(X,d_X)=P$; and
\item  \emph{Consistency}: upon reducing distances between points in the same cluster of $X$ (as produced by $\cfunc$), and increasing distances between points in different clusters, the result of applying the method to the new metric does not change.
\end{itemize}

\begin{theorem}[\cite{kleinberg}]
There exists no clustering algorithm that simultaneously satisfies scale invariance, surjectivity and consistency.
\end{theorem}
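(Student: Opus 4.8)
The plan is to run Kleinberg's original argument: show that any $\cfunc$ satisfying scale invariance and consistency has a range that is an \emph{antichain} under the refinement order on partitions, and observe that this already contradicts surjectivity as soon as $|X|\ge 2$ — the partition of $X$ into singletons is a proper refinement of the one-block partition, and surjectivity forces both of these into the range, yet an antichain cannot contain two distinct comparable elements.

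The first ingredient is a family of ``forcing'' metrics. For a partition $\Gamma$ of a finite set $X$ and reals $0<a\le b$, let $\delta_{\Gamma,a,b}$ be the metric assigning distance $a$ to pairs lying in a common block of $\Gamma$ and distance $b$ to pairs in distinct blocks; a quick triangle-inequality check shows this is genuinely a metric for every $0<a\le b$ (the only way to ``shortcut'' a distance-$b$ pair would need a third point in both blocks, which is impossible). The key lemma is: \emph{if $\Gamma$ lies in the range of $\cfunc$, then there is a threshold $\rho_\Gamma>0$ such that $\cfunc(X,\delta_{\Gamma,a,b})=\Gamma$ for every $0<a<b$ with $a/b\le\rho_\Gamma$.} To prove it, start from any $d$ with $\cfunc(X,d)=\Gamma$; applying consistency, collapse all within-block distances of $d$ to a common tiny value $\varepsilon$ and inflate all between-block distances to a common large value $M$ — a legitimate consistency move whose target $\delta_{\Gamma,\varepsilon,M}$ is still a metric — so $\cfunc(X,\delta_{\Gamma,\varepsilon,M})=\Gamma$; since $\varepsilon$ may be taken arbitrarily small and $M$ arbitrarily large while the move stays legitimate, and scale invariance lets me rescale freely, this yields $\cfunc(X,\delta_{\Gamma,a,b})=\Gamma$ for all $0<a<b$ with $a/b$ below some threshold $\rho_\Gamma$.

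With the lemma in hand, suppose $\Gamma_0$ and $\Gamma_1$ both lie in the range with $\Gamma_0$ a proper refinement of $\Gamma_1$. Build a single three-scale metric $d$ on $X$: distance $p$ for pairs sharing a $\Gamma_0$-block, distance $q$ for pairs sharing a $\Gamma_1$-block but not a $\Gamma_0$-block, and distance $s$ for pairs in different $\Gamma_1$-blocks, where $0<p<q<s$ are chosen so that $p/q\le\rho_{\Gamma_0}$ and $q/s\le\rho_{\Gamma_1}$ (for instance $q=1$, $p=\rho_{\Gamma_0}$, $s=1/\rho_{\Gamma_1}$, after harmlessly replacing each $\rho_\Gamma$ by something smaller than $1$). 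A triangle-inequality check, using crucially that $\Gamma_0$ refines $\Gamma_1$, confirms $d$ is a metric. Now $d$ is simultaneously a valid consistency-transform of $\delta_{\Gamma_0,p,q}$ (within-$\Gamma_0$ distances are not increased, between-$\Gamma_0$ distances are not decreased) and of $\delta_{\Gamma_1,q,s}$, so by the lemma consistency forces $\cfunc(X,d)=\Gamma_0$ and $\cfunc(X,d)=\Gamma_1$ at once, contradicting $\Gamma_0\ne\Gamma_1$. Hence the range is an antichain, and surjectivity fails.

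I expect the main obstacle to be the bookkeeping in the last step: the scales $p,q,s$ must be picked so that $d$ is at the same time a metric and a legitimate consistency-transform of both forcing metrics, and these constraints pull the scales in opposite directions — the lemma is engineered precisely so that there is always enough slack. A secondary nuisance is the degenerate partitions (all singletons, or one block), where $\delta_{\Gamma,a,b}$ collapses to a single nontrivial distance value and the corresponding hypotheses in the lemma and in the construction must be read as vacuous; the arguments go through verbatim, and in fact the cleanest instance of the contradiction is already visible there, since then $d$ becomes a uniform metric that $\cfunc$ would be obliged to send to both the discrete and the indiscrete partition.
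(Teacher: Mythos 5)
Your argument is correct and is essentially Kleinberg's own proof, which the paper cites from \cite{kleinberg} without reproducing: scale invariance plus consistency force the range of $\cfunc$ to be an antichain in the refinement order (via the two-scale forcing metrics and the three-scale interpolating metric), which is incompatible with surjectivity once $|X|\ge 2$. The details check out, including the triangle-inequality verifications (your three-scale $d$ is in fact an ultrametric because $\Gamma_0$ refines $\Gamma_1$) and the handling of the degenerate partitions.
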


Kleinberg also points out that his results shed light on the trade-offs one has
to make in choosing clustering algorithms.  In this paper, we produce
a variation on this theme, which we believe also has implications for
how one thinks about and applies clustering algorithms.

 In our earlier paper \cite{clust-um}, in the context of hierarchical methods, we have established a variant on Kleinberg's theorem, in which instead of impossibility, one actually proves an \emph{existence and uniqueness} result, again assuming three plausible properties, slightly different from Kleinberg's.  In the present paper, we make one more step towards understanding the theoretical foundations of clustering methods by expanding on the theme of \cite{clust-um} and developing a context in which one can obtain much richer classification results, which include versions of a number of familiar methods, including analogues of the clique clustering methods familiar in network and graph problems.  We now describe our context and results.

\subsection{Overview of our methods and results}
We regard clustering as the ``statistical version" of the geometric notion of constructing the connected components of a topological space.  Recall (see, e.g. \cite{munkres}) that the path
components of a topological space $X$ are the equivalence classes of
points in the space under the equivalence relation $\sim _{path}$,
where, for $x,x' \in X$, we have $x \sim _{path} y$ if and only if
there is a continuous map $\varphi: [0,1] \rightarrow X$ so that
$\varphi (0) = x$ and $\varphi (1) = x'$.  In other words, two points
in $X$ are in the same path component if they are connected by a
continuous path in $X$.   We let $\pi _0 (X)$ denote the set of connected components of the space $X$.  A crucial observation about $\pi _0$ is that it is a \textbf{functor}, i.e. that not only does one construct a set (of components) associated to every topological space, but additionally, to every continuous map $f : X \rightarrow Y$ one associates a map of sets $\pi _0(X) \rightarrow \pi _0 (Y)$.  We take the point of view that this \textbf{functoriality} is a key property to export from topological spaces to the finite metric spaces which form the input to clustering algorithms.  Functoriality has been a very useful framework
for the discussion of a variety of problems within mathematics over
the last few decades, see
\cite{maclane}. 

\begin{remark}

 The reader may not be familiar with the notion of functoriality, as presented in this paper.   The fundamental idea behind functoriality is that it is crucial to study the maps or transformations between mathematical objects as well as the objects themselves.  Interpreted at this level of generality, one can readily see the importance of this notion via a number of examples.  

\begin{enumerate}
\item{{\bf Fourier analysis:}  The key observation of Fourier analysis in its various forms is the the study of the symmetry groups of  a space allows one to construct very useful and conceptually important bases for spaces of functions on the space.  So, the standard Fourier basis for complex valued functions on the circle is constructed by selecting functions which satisfy certain transformation laws under the rotational symmetries.  This is of course also true for, e.g., the spherical harmonic basis for the functions on $\Bbb{R}^3$.  Functoriality can be regarded as a transformation law.  }
\item{{\bf Normal forms for matrices:}  The analysis of the conjugation action of groups of invertible matrices on themselves gives rise to normal forms of matrices, such as Jordan normal form.  This means that the conjugation symmetries on spaces of matrices allows one to understand their organization}
\item{{\bf Algebraic topology:}  In algebraic topology, one associates to topological spaces algebraic objects called {\em homology groups}. It is critical for applications of these groups, and for computing them, that they obey a transformation law for any continuous map of spaces.  This transformation law turns out to be very powerful, in that it is a key factor in making a small axiomatic description of the homology groups.  Just as transformation laws for symmetries determine the Fourier basis, so the functoriality determines the homology groups.   }
\item{{\bf Galois theory:} In Galois theory, groups of symmetries allow one to describe the sets of solutions of algebraic equations over fields, such as the rational numbers.  The observation that equations were associated to symmetry groups revolutionized number theory, and to this day the study of various kinds of transformation laws in modular forms continues to have a  powerful impact on the field.   }
\end{enumerate} 

\end{remark}
The point of the present paper is to illustrate that functoriality allows a very useful framework for classifying large families of clustering schemes.  This framework will include clustering schemes which are sensitive to effects by density within the domain data set.  Moreover, the functoriality of the construction of path connected components is a key feature in all aspects of topology, and the functoriality of clustering constructions should play a similar role in the computational topology of finite data sets.  

Then, we require that a standard\footnote{As opposed to hierarchical.} clustering algorithm $\cfunc$ be a rule which assigns to every finite metric space $X$ a set of clusters $\cfunc (X)$, subject to additional the requirement that every map $f: X \rightarrow Y$ induce a map of sets $\cfunc(f) : \cfunc (X) \rightarrow \cfunc (Y)$.  

The question is what the replacement for continuous maps (i.e. \emph{morphisms} of topological spaces) should be, and we find that there are many choices.  We consider three nested classes of maps, namely isometries (called $iso$), distance non-increasing maps (called $gen$, for general), and distance non-increasing maps which are injections on the sets of points (called $inj$), with the hierarchy being given by $iso\subset inj\subset gen.$ We find that functoriality with respect to $gen$ is quite restrictive, in that it singles out single linkage clustering as the only functor which satisfies the $gen$ functoriality requirement.  On the other hand, functoriality with respect to $iso$ appears not to be restrictive enough, in that it permits the specification of an arbitrary clustering on every isometry class of finite metric spaces.

One of the features of our classification is the proof that for the $gen$ and $inj$ cases, a certain implicitly defined  property of clustering schemes called \textbf{excisiveness}, which amounts to requiring that the clustering method is idempotent on each of the blocks of the partition it generates, is \emph{equivalent} to the existence of a certain kind of \emph{explicit generative model} for the scheme.

 Mathematical results which prove the existence of a generative model given that some externally defined properties are satisfied are very powerful in mathematics.  For example, the result that a system satisfying the axioms for a vector space over a field $\mathbb{F}$ admits a basis is very useful, as is  the fact that a graph with no odd order cycles is bipartite. 

The equivalence between excisiveness and the existence of a generative model for the clustering scheme has the interpretation that excisive schemes are parametrized by sets of ``test metric spaces", which are those connected into a single cluster by the scheme.  The method associated to such a family is then defined using a criterion which asserts that two points $x$ and $x'$ in a metric space $X$ belong to the same cluster if there is a sequence of points $\{ x_0, x_1, \ldots , x_n \}$ with $x_0 = x$, $x_n = x'$, and so that for each pair $x_i, x_{i+1}$ there is a morphism in $gen$ or $inj$ from one of the test metric spaces into $X$ with $x_i$ and $x_{i+1}$ in its image. 

 A key feature of the choice of morphisms in $inj$ is that (since they are injective) this permits one to take account of \emph{density} in a certain sense.  If one considers functoriality with respect to $gen$, one finds that the morphisms might collapse a large dense collection of points into a single point, which makes it impossible to take density into account in building the clusters.  Our classification of functors based on $inj$ includes analogues of ``clique clustering" \cite{hungarian-clique} and the DBSCAN algorithm \cite{dbscan}, and such methods clearly have the effect of taking a version of density into account. 

Whereas our classification of $gen$ functorial schemes yields that single linkage is the unique possible scheme, which turns out to be excisive, we find large classes of non-excisive clustering schemes in $inj$. Our construction of such families relies on a study of the metric invariants that are well behaved under $inj$ morphisms.  Finally, it turns out that all schemes in $inj$ that admit a \emph{finite} generative model arise as the composition of single linkage clustering with an application that changes the underlying metric of the input metric space. This result has clear practical consequences which we discuss.

We believe that functoriality, incorporating as it does the idea that clustering should not operate only on finite metric spaces as isolated objects, but rather on the metric space together with other metric spaces which are related to it via morphisms, is a very desirable property.  We can point to its success within topology as an extremely powerful tool which pins down many constructions as the unique solution to existence problems, and which is the key to most applications 
of topological methods.   It is also true that functoriality is necessary in order to extend the methods of clustering to higher order connectivity information with computational topology of finite data sets \cite{carlsson-09}.  

In this paper we demonstrate that the use of restricted notions of functoriality allows us to study clustering schemes which take proxies for density into account.  This is important because it is clear that some of the value of alternate schemes such as average and complete linkage (which are not functorial)  is due precisely to their ability to do this.  Therefore, some of the methods we study will not suffer from the problems inherent in single linkage clustering, such as chaining  \cite[Section 7.4]{jardine-sibson} \cite{multi-param}.

\subsection{Organization of the paper}  In Section
\ref{sec:prelim} we discuss basic background concepts that we use in our presentation. Section \ref{cats}  introduces the concept of a category and gives many examples and constructions we use in the paper. The concept of a functor and functoriality are discussed in Section \ref{sec:functors}.  
We present our main characterization results for standard clustering methods in Section
\ref{results-sc}. We study hierarchical methods in Section \ref{results-hc}. Concluding remarks are given in Section \ref{various}.

\section{Preliminaries}
\label{sec:prelim}
For a finite set $X$ we denote by $\mathcal{P}(X)$ the set of all partitions of $X$. For each $k\in\N$, $\mathcal{M}_k$ denotes the collection of all finite metric spaces with cardinality $k$ and $\mathcal{M}=\bigcup_{k\in\N} \mathcal{M}_k$ denotes the collection of all finite metric spaces. For $(X,d_X)\in\mathcal{M}$ let $\sep(X,d_X):=\min_{x\neq x'}d_X(x,x')$ be the \emph{separation} of $X$, $\diam(X,d_X):=\max_{x,x'\in X}d_X(x,x')$  be the \emph{diameter} of $X$, and for $\lambda>0$ we denote by $\lambda\cdot X$ the metric space  $(X,\lambda\cdot d_X)$. For $\delta>0$ and $k\geq 2$ we let $\Delta_k(\delta)$ denote the metric space with $k$ points whose inter-point distances equal $\delta$.

\begin{definition}\label{def-equiv-rips} On $(X,d_X)\in\mathcal{M}$, for each $\delta\geq 0$, we define the equivalence relation $\sim _\delta$, where $x \sim _\delta x^{\prime}$ if and only if there is a sequence $x_0, x_1, \ldots , x_k \in X$ so that $x_0 = x, x_k = x^{\prime}$, and $d_X(x_i, x_{i+1}) \leq \delta$ for all $i$.  We denote by $[x]_\delta$ the equivalence class of $x\in X$ under $\sim_\delta$.
\end{definition}

The following lemma follows directly from the definition of $\sim_\delta$.
\begin{lemma}\label{prop:props-uaster} Let $(X,d_X)\in\mathcal{M}$ and $\delta\geq 0$. Then, 
$[x]_\delta\neq [x']_\delta$ for some $x,x'\in X$ if and only if { $$d_X(a,a')>\delta,\,\,\mbox{for all $a\in [x]_\delta$ and $a'\in [x']_\delta$}.$$}
\end{lemma}

Let $X$ be any finite set and $W_X:X\times X\rightarrow \R_+$ be a symmetric function s.t. $W_X(x,x)=0$ for all $x\in X$. Then, the \emph{maximal sub-dominant ultrametric} relative to $W_X$ is given by
{\beq{eq:U}\mathcal{U}(W_X)(x,x')=\min\Big\{\max_{i}W_X(x_i,x_{i+1}),\,\{x_i\}_{i=0}^k\,\mbox{s.t. $x_0=x$ and $x_k=x'$}\Big\}
.\eeq}
This construction and its correctness are standard \cite[\S 6.4.3]{handbook-um}.

We define objects which will encode
the notion of ``multiscale'' or ``multiresolution'' partitions.

\begin{definition}\label{def:persistent-sets}
A {\em persistent set} is a pair $(X, \theta_X)$, where $X$ is a finite
set, and $\theta_X$ is a function from the non-negative real line $[0,+
\infty)$ to ${\mathcal P}(X)$  so that the following
properties hold.
\begin{enumerate}
\item{ If $r \leq s$, then $\theta_X (r)$ refines $\theta_X(s)$.  }
\item{For any $r$, there is a number $\epsilon > 0$ so that $\theta_X (r^{\prime}) = \theta_X (r)$ for all $r^\prime\in[r,r+\epsilon]$.} 
\end{enumerate}
If in addition there exists $t>0$ s.t. $\theta_X(t)$ consists of the
single block partition for all $r\geq t$, then we say that
$(X,\theta_X)$ is a \emph{dendrogram}.\footnote{In the paper we will be
using the word dendrogram to refer both to the object defined here and
to the standard graphical representation of them.}
\end{definition}

\begin{example}\label{rips}  Let $(X,d_X)$ be a finite metric space.  Then we can associate to $(X,d_X)$ the persistent set whose underlying set is $X$, and where for each $r\geq 0$, blocks of  the partition $\theta_X (r)$ consist of the  equivalence classes under the equivalence relation $\sim _r$.
\end{example}

\begin{example} \label{hierarchical} Here we consider the family
    of Agglomerative Hierarchical clustering techniques,
    \cite{clusteringref}. We (re)define these by the recursive
    procedure described next. Let $X=\{x_1,\ldots,x_n\}$ and let
    $\mathcal L$ denote a family of \emph{linkage functions},
    i.e. functions which one uses for defining the distance between
    two clusters. Fix $l\in {\mathcal L}$. For each $R>0$ consider the
    equivalence relation $\sim_{l,R}$ on blocks of a partition
    $\Pi\in{\mathcal P}(X)$, given by \mbox{${\mathcal B}\sim_{l,R}
      {\mathcal B}'$} if and only if there is a sequence of blocks
    ${\mathcal B}={\mathcal B}_1,\ldots,{\mathcal B}_s={\mathcal B}'$
    in $\Pi$ with $l({\mathcal B}_k,{\mathcal B}_{k+1})\leq R$ for
    $k=1,\ldots,s-1$. Consider the sequences $r_1,r_2,\ldots\in
    [0,\infty)$ and $\Theta_1,\Theta_2,\ldots\in \mathcal{P}(X)$ given
    by $\Theta_1:=\{x_1,\ldots,x_n\}$ and for $i\geq 1$,
    $\Theta_{i+1}=\Theta_i/\sim_{l,r_i}$ where $r_i:=
    \min\{l({\mathcal B},{\mathcal B}'),\,{\mathcal B},{\mathcal
      B}'\in\Theta_i,\,{\mathcal B}\neq {\mathcal B}'\}.$ Finally, we
    define $\theta^l:[0,\infty)\rightarrow \mathcal{P}(X)$ by
    $r\mapsto \theta^l(r):=\Theta_{i(r)}$ where $i(r):=\max\{i|r_i\leq
    r\}$. 

    Standard choices for $l$ are \emph{single linkage}:
    $$l({\mathcal B},{\mathcal B}')=\min_{x\in {\mathcal B}}\min_{x'\in
      {\mathcal B}'}d_X(x,x');$$ \emph{complete linkage} $$l({\mathcal
      B},{\mathcal B}')=\max_{x\in {\mathcal B}}\max_{x'\in {\mathcal
        B}'}d_X(x,x');$$ and \emph{average linkage}: $$l({\mathcal
      B},{\mathcal B}')=\frac{\sum_{x\in {\mathcal B}}\sum_{x'\in
        {\mathcal B}'}d_X(x,x')}{|{\mathcal B}| \cdot |{\mathcal B}'|}.$$
    It is easily verified that the notion discussed in Example
    \ref{rips} is \emph{equivalent} to $\theta^l$ when $l$ is the
    single linkage function. Note that, unlike the usual definition of
    agglomerative hierarchical clustering, at each step of the
    inductive definition we allow for more than two clusters to be
    merged.
\end{example}



\section{Categories}\label{cats}

\subsection{Definitions and Examples}
In this section, we will give a brief description of the theory of
categories and functors, which will be the framework in which we state
the constraints that we require of our clustering algorithms.  An
excellent reference for these ideas is \cite{maclane}. 

Categories are useful mathematical constructs that encode the nature
of certain objects of interest together with a set of
admissible maps between them. This formalism is
extremely useful for studying classes of mathematical objects which
share a common structure, such as sets, groups, vector spaces, or
topological spaces.  The definition is as follows.

\begin{definition} A \textbf{category} $\underline{C}$ consists of: 
\begin{itemize}

	\item{ A collection of \textbf{objects} $\ob(\underline{C})$ (e.g.
	sets, groups, vector spaces, etc.)}

	\item{ For each pair of objects $X,Y \in \ob(\underline{C})$, a
	set\\
	 $\mor_{\underline{C}} (X,Y)$, the \textbf{morphisms} from $X$ to $Y$
	(e.g. maps of sets from $X$ to $Y$, homomorphisms of groups
	from $X$ to $Y$, linear transformations from $X$ to $Y$,
	etc. respectively)}
	
	\item{ Composition operations:\\ $\compcirc :
	\mor_{\underline{C}} (X,Y) \times \mor_{\underline{C}} (Y,Z)
	\rightarrow \mor_{\underline{C}} (X,Z)$, corresponding to
	\textbf{composition} of set maps, group homomorphisms, linear
	transformations, etc.  }
	
	\item{For each object $X \in \underline{C}$, a distinguished element $\textrm{id}_X\in \mor_{\underline{C}}(X,X)$, called the \textbf{identity} morphism.}
\end{itemize}
The composition is assumed to be associative in the obvious sense, and for any $f \in \mor_{\underline{C}} (X,Y)$, it is assumed that $\textrm{id}_Y \compcirc f = f$ and $f \compcirc \textrm{id}_X = f$. 
\end{definition} 

\begin{example}[The category of sets]
Denote by $\underline{Sets}$ the category whose objects are all sets, and the morphisms between two sets $A$ and $B$ are all the functions from $A$ to $B$.
\end{example}

A morphism $f:X \rightarrow Y$ which has a two sided inverse $g:
Y\rightarrow X$, so that $f \compcirc g = \textrm{id}_Y$ and $g \compcirc f =
\textrm{id}_X$, is called an {\bf isomorphism}.  Two objects which are
isomorphic are intuitively thought of as ``structurally
indistinguishable" in the sense that they are identical except for
naming or choice of coordinates.  For example, in the category of
sets, the sets $\{ 1,2,3 \}$ and $\{ A,B,C \}$ are isomorphic, since
they are identical except for the choice made in labelling the elements.


\begin{example}
Here we consider four very simple categories, that we are going to refer to as $\underline{0},\underline{1},\underline{2}$ and $\underline{3}$ respectively. 

\begin{itemize}

\item The category $\underline{0}$ has $ob(\underline{0})=\emptyset$ and all the conditions in the definition above are trivially satisfied.

\item Consider the category $\underline{1}$  with exactly one object  $A$ and one morphism:  $\mor_{\underline{1}}(A,A)=f$. It follows that $f$ must be the identity morphism $id_A$. This is represented graphically as follows:

$$\xymatrix{ 
A \ar@(ur,dr)[]|{id_A}}$$

\item The category $\underline{2}$ has exactly two objects $A$ and $B$ and three morphisms: the identities from $A$ to $A$ and from $B$ to $B$ and exactly one morphism in $\mor_{\underline{2}}(A,B)$:

$$\xymatrix{ 
A \ar@(ul,dl)[]|{id_A} \ar[rr]|f 
&& B  \ar@(ur,dr)[]|{id_B} }$$

\item Finally, the category $\underline{3}$ has exactly three objects $A$, $B$ and $C$ and six morphisms: the identities from $A$ to $A$,  from $B$ to $B$ and $C$ to $C$, and three more morphisms,  $\mor_{\underline{3}}(A,B)=f$, $\mor_{\underline{3}}(B,C)=g$ and $\mor_{\underline{3}}(A,C)=h$:

$$\xymatrix{ 
A \ar@(ul,dl)[]|{id_A} \ar[rr]|f   \ar[rd]|h && B  \ar@(ur,dr)[]|{id_B} \ar[ld]|g\\
 & C \ar@(dl,dr)[]|{id_C}
}
$$

\noindent
Now, note that in order to satisfy the composition rule one must have $h=g\circ f$.

\end{itemize}
\end{example}

\begin{example}
A more surprising example of a category is given by the following construction. A \emph{monoid} $(M,\ast,e)$ consists of a set $M$ equipped with a binary operation $\ast :M\times M\rightarrow M$ s.t. $(A\ast B)\ast C = A\ast (B\ast C)$ for all $A,B,C\in M$ and an element $e\in M$ s.t. $e\ast A = A\ast e =A$ for all $A\in M$. A monoid can be made into a category $\underline{M}$ with a single object $O$ and $\mor_{\underline{M}}(O,O)=M$ and $id_O$ being represented by $e$. The composition of morphisms is induced by the operation $\ast$.
\end{example}

\begin{example}
Let $\mathbb{E}$ denote the extended positive real line $[0,+\infty]$. We construct the category $\catr$ with $\ob(\catr)=\mathbb{E}$ and $\mor_{\catr}(a,b) = \{(a,b)\}$  if $a\geq b$, and $\mor_{\catr}(a,b)=\emptyset$. The composition is given by $(a,b)\compcirc (b,c) = (a,c)$.
\end{example}

Now we discuss the core  constructions for this paper. 
\begin{definition}[\bf $\catc$, a category of outputs of standard clustering schemes]{
Let $Y$ be a finite set, $P_Y\in \mathcal{P}(Y)$, and $f:X\rightarrow Y$ be a set map. We define $f^*(P_Y)$ to be the partition of $X$ whose blocks are the sets $f^{-1}({B})$ where ${B}$ ranges over the blocks of $P_Y$.
 We construct the category $\catc$ of outputs of standard
  clustering algorithms with $\ob(\catc)$ equal to all
  possible pairs $(X,P_X)$  where $X$ is a finite set and $P_X$ is a partition of $X$: $P_X\in {\mathcal P}(X)$. For objects $(X,P_X)$ and $(Y,P_Y)$ one sets $\mor_{\catc}\big((X,P_X),(Y,P_Y)\big)$ to be the set of all maps $f:X\rightarrow Y$ with the property that $P_X$ is a refinement of $f^*(P_Y)$.
  }
\end{definition}

\begin{example}
Let $X$ be any finite set, $Y=\{a,b\}$ a set with two elements, and $P_X$ a partition of $X$. Assume first that $P_Y=\{\{a\},\{b\}\}$ and let $f:X\rightarrow Y$ be any map. Then, in order for $f$ to be a morphism in $\mor_{\catc}\big((X,P_X),(Y,P_Y)\big)$ it is necessary that $x$ and $x'$ be in different blocks of $P_X$  whenever $f(x)\neq f(x')$. Assume now that $P_Y = \{a,b\}$ and $g:Y\rightarrow X$. Then, the condition that $g\in\mor_{\catc}\big((Y,P_Y),(X,P_X)\big)$ requires that $g(a)$ and $g(b)$ be in the same block of $P_X$.
\end{example}

We will also construct a category of \emph{persistent sets}, which will constitute the output of hierarchical clustering functors. 

\begin{definition}[\bf $\catp$, a category of outputs of hierarchical clustering schemes]\label{definition:P}
{ Let $(X, \theta_X), (Y, \theta_Y)$ be persistent sets.  A
map of sets $f: X \rightarrow Y$ is said to be {\em persistence
preserving} if for each $r \in \mathbb{R}$, we have that $\theta_X (r)$
is a refinement of $ f^*(\theta_Y(r))$.  We define a category $\underline{{\mathcal P}}$
whose objects are persistent sets, and where $\mor_{\underline{{\mathcal
P}}}((X,\theta_X), (Y, \theta_Y))$ consists of the set maps from $X$ to $Y$
which are persistence preserving.  }
\end{definition}
It is easily verified that the
composite of persistence preserving maps is persistence preserving,
and that any identity map is persistence preserving.  A simple example is shown in Figure
\ref{fig:pers-preserv}.
\begin{figure}[htb]
 	\centering
 	\includegraphics[width=1\linewidth]{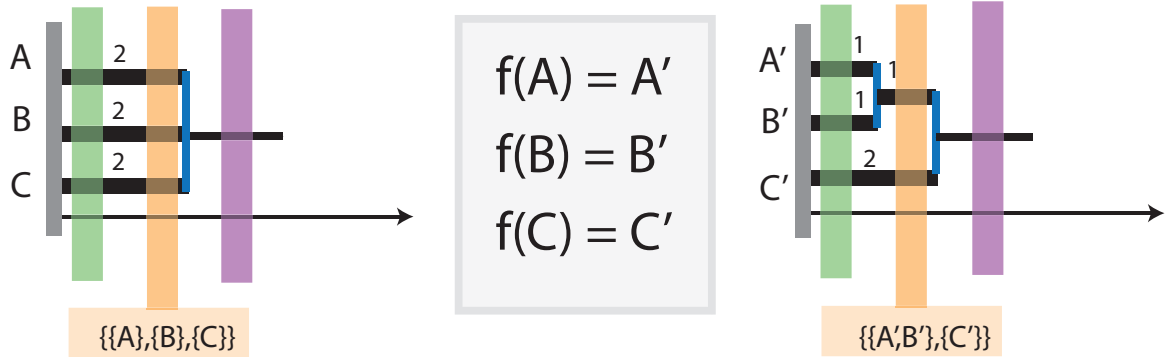}
 	\caption{Two persistent sets $(X,\theta_X)$ and $(Y,\theta_Y)$
 	represented by their dendrograms. On the left one defined in
 	the set $X = \{A,B,C\}$ and on the right one defined on the
 	set $Y=\{A',B',C'\}$. Consider the given set map
 	$f:X\rightarrow Y$. Then we see that $f$ is persistence
 	preserving since for each $r\geq 0$, the partition $\theta_X(r)$
 	is a refinement of $f^*(\theta_Y(r))$. Indeed, there are three
 	interesting ranges of values of $r$. Pick for example $r$ like
 	in the orange shaded area: $r\in[1,2)$. Then $\theta_Y(r) =
 	\{\{A',B'\},\{C'\}\}$ and hence $f^*(\theta_Y(r)) =
 	\{f^{-1}(\{A',B'\}),\{f^{-1}(C')\}\} = \{\{A,B\},\{C\}\}$
 	which is indeed refined by
 	$\theta_X(r)=\{\{A\},\{B\},\{C\}\}$. One proceeds similarly for
 	the other two cases.}  \label{fig:pers-preserv}
 \end{figure}

\subsection{Three categories of finite metric spaces.}
{ We will describe three categories $\miso$, $\mmon$, and $\mgen$, whose
collections of objects will all consist of the collection of finite
metric spaces $\mathcal{M}$.  For $(X,d_X)$ and $(Y, d_Y)$ in $\mathcal{M}$, a  map $f: X \rightarrow Y$ is said to be {\bf distance
non increasing} if for all $x,x^{\prime} \in X$, we have
$d_Y(f(x),f(x^{\prime})) \leq d_X(x,x^{\prime})$.  It is easy to check
that composition of distance non-increasing maps are also distance
non-increasing, and it is also clear that $\textrm{id}_X$ is always distance
non-increasing.  We therefore have the category $\mgen$, whose objects are finite metric spaces, and so that for
any objects $X$ and $Y$, $\mor_{\mgen}(X,Y)$ is
the set of distance non-increasing maps from $X$ to $Y$, cf.
\cite{isbell} for another use of this class of maps. It is clear that compositions
of injective maps are injective, and that all identity maps are injective, so we
have the new category $\mmon$, in which
$\mor_{\mmon}(X,Y)$ consists of the \textbf{injective distance
non-increasing maps}.  Finally, if $(X,d_X)$ and $(Y,d_Y)$ are finite
metric spaces, $f:X
\rightarrow Y$ is an {\bf isometry} if $f$ is bijective and $d_Y(f(x),
f(x^{\prime})) = d_X(x,x^{\prime})$ for all $x$ and $x^{\prime}$.  It
is clear that as above, one can form a category $\miso$ whose objects are finite metric spaces and whose morphisms
are the isometries.   Furthermore, one has inclusions 
\begin{equation}\label{eq:inclu-cats}
\miso\subseteq \mmon
\subseteq \mgen
\end{equation}
 of subcategories (defined as in \cite{maclane}).  Note that although
 the inclusions are bijections on object sets, they are proper
 inclusions on morphism sets, i.e. in general they are not surjective.
 }

\begin{remark}\label{remark:mgen}
The category $\mgen$ is special in that for any pair of finite metric spaces $X$ and $Y$, \mbox{$\mor_{\mgen}(X,Y)\neq \emptyset$}. Indeed, pick $y_0\in Y$ and define $\phi:X\rightarrow Y$ by $x\mapsto y_0$ for all $x\in X$. Clearly, $\phi\in\mor_{\mgen}(X,Y)$. This is not the case  for $\mmon$ since in order for $\mor_{\mmon}(X,Y)\neq \emptyset$ to hold it is necessary (but not sufficient in general) that $|Y|\geq |X|$.
\end{remark}

\section{Functors and functoriality.} \label{sec:functors}
Next we introduce the key concept in our discussion, that of a
 \emph{functor}.  We give the formal definition first, and several examples will appear as different constructions that we use in the paper.
\begin{definition}[\bf Functor]\label{def:functor}
Let $\underline{C}$ and $\underline{D}$ be categories.  Then a {\em functor} from $\underline{C}$ to $\underline{D}$ consists of:

\begin{itemize}

	\item{A map of sets $F:\ob(\underline{C}) \rightarrow
	\ob(\underline{D})$.} 
	
	\item{For every pair of objects $X,Y \in \underline{C}$ a map
	of sets $\Phi(X,Y):\mor_{\underline{C}}(X,Y) \rightarrow
	\mor_{\underline{D}}(FX,FY)$ so that
	\begin{enumerate}
		\item{ $\Phi(X,X)(\textrm{id}_X) = \textrm{id}_{F(X)}$ for all $X \in
		\ob(\underline{C})$, and}

            \item{$ \Phi (X,Z)(g \compcirc f) = \Phi (Y,Z)(g)\compcirc \Phi (X,Y)(f)$ for all $f \in \mor_{\underline{C}}(X,Y)$ and $g \in \mor_{\underline{C}}(Y,Z)$.}
	\end{enumerate} }
\end{itemize}
Given a category $\underline{C}$,  an \textbf{endofunctor} on $\underline{C}$ is any functor $F:\underline{C}\rightarrow \underline{C}.$
\end{definition}
\begin{remark}
In the interest of clarity, we will always refer to the pair $(F,\Phi)$ with a
single letter $F$. See diagram (\ref{eq:diagram-general}) below for an example.
\end{remark}


\begin{example}{\bf (Forgetful functors)}  When one has two categories $\underline{C}$ and $\underline{D}$, where the objects in $\underline{C}$ are objects in $\underline{D}$  equipped with some additional structure and the morphisms in $\underline{C}$ are simply the morphisms in $\underline{D}$ which preserve that structure, then we obtain the ``forgetful functor" from $\underline{C}$ to $\underline{D}$, which carries the object in $\underline{C}$ to the same object in $\underline{C}$, but regarded without the additional structure.  For example, a group can be regarded as a set with the additional structure of multiplication and inverse maps, and the group homomorphisms are simply the set maps which respect that structure.  Accordingly, we have the functor from the category of groups to the category of sets which ``forgets the multiplication and  inverse".  Similarly, we have the forgetful functor from $\catc$ to the category of sets, which forgets the  presence of $P_X$ in the output set $(X, P_X)$.  
\end{example}

\begin{example}   
The inclusions $ \miso\subseteq \mmon \subseteq \mgen $ are both functors.  
\end{example}

\begin{example}[{Scaling functor}]\label{scaling} For any  $\lambda >0$ we define an endofunctor $\sigma _{\lambda} : \mgen
\rightarrow \mgen$ on objects by 
$  \sigma _{\lambda}(X,d_X) = (X,\lambda \cdot d_X)
$
and on morphisms by $\sigma _{\lambda} (f) = f$.  One easily verifies that if $f$ satisfies the conditions for being a morphism in $\mgen$ from $(X,d_X)$ to $(Y,d_Y)$, then it readily satisfies the conditions of being a morphism from $(X,\lambda \cdot d_X)$ to $(Y, \lambda \cdot d_Y)$.  Clearly, $\sigma_\lambda$ can also be regarded as an endofunctor in $\miso$ and $\mmon.$

Similarly, we define a functor $s_{\lambda} : \underline{{\mathcal P}} \rightarrow \underline{{\mathcal P}}$ by setting $s_{\lambda}(X, \theta _X) = (X, \theta_X ^{\lambda})$, where 
$\theta_X ^{\lambda}(r) = \theta_X (\frac{r}{\lambda})$. 
\end{example}

\subsection{Clustering algorithms as functors}

The notion of categories, functors and functoriality provide
  useful framework for studying algorithms.  One first defines a class of
  input objects $\underline{\mathcal{I}}$ and a class of output objects $\underline{\mathcal{O}}$. Moreover, one
  associates to each of these classes a class of natural maps, the morphisms,  between
  objects. For the problem of HC for example, the input class is the
  set of finite metric spaces and the output class is that of
  dendrograms.  An algorithm is to be regarded as a functor between a category of input objects and a category of output objects. 

 An algorithm will therefore be a procedure that assigns to each $I\in\underline{\mathcal
    I}$ an output $O_I\in\underline{{\mathcal O}}$ with the further property that
  it respects relations between objects in the following sense. Assume
  $I,I'\in \underline{\mathcal I}$ such that there is natural map
  $f:I\rightarrow I'$. Then, the algorithm has to have the property
  that the relation between $O_I$ and $O_{I'}$ has to be represented
  by a natural map for output objects.

\begin{example}
Assume that $\cat{I}$ is such that $\mor_{\cat{I}}(X,Y)=\emptyset$ for all $X,Y\in \cat{I}$ with $X\neq Y$. In this case, since there are no morphisms between input objects any functor $\func{A}:\cat{I}\rightarrow \cat{O}$ can be specified arbitrarily on each $X\in\cat{O}$.
\end{example}

We view any given clustering scheme as a procedure which takes as input a
finite metric space $(X,d_X)$, and delivers as output either an object in $\catc$ or $\catp$:
\begin{itemize}
\item \textbf{Standard clustering}: a pair $(X,P_X)$ where $P_X$ is a partition of $X$. Such a pair is an object in the category $\catc$.  
\item \textbf{Hierarchical clustering}: a pair $(X,\theta_X)$ where $\theta_X$ is a persistent set over $X$. Such a pair is an object in the category $\catp$.
\end{itemize}
  
The concept of \textbf{functoriality} refers to the additional condition
that the clustering procedure should map a pair of input objects into a pair
of output objects in a manner which is consistent with respect
to the morphisms attached to the input and output spaces. When this
happens, we say that the clustering scheme is \textbf{functorial}.
This notion of consistency is made precise in Definition
\ref{def:functor} and described by diagram (\ref{eq:diagram-general}). Let $\many$ stand for any of $\mgen$, $\mmon$ or $\miso$. 

According to Definition \ref{def:functor}, in order to view a standard
clustering scheme as a functor $\cfunc:\many\rightarrow \catc$ we need to specify:

\begin{itemize}
\item[(1)] how it maps
objects of $\many$ (finite metric spaces)
into objects of $\catc$, and 
\item[(2)]
how a morphism $f:(X,d_X)\rightarrow (Y,d_Y)$ between two
objects $(X,d_X)$ and $(Y,d_Y)$ in the input category
$\many$ induces a map in the output category
$\catc$, see diagram (\ref{eq:diagram-general}).
\end{itemize}

\begin{equation}\label{eq:diagram-general}
\xymatrix{
(X,d_X) \ar[d]^{\cfunc} \ar[r]^{f} &  (Y,d_Y) \ar[d]^{\cfunc} \\
 (X,P_X) \ar[r]^{\cfunc(f)} & (Y,P_Y)} 
\end{equation}

Similarly, in order to view a hierarchical clustering scheme as a functor $\hfunc:\many\rightarrow \catp$ we need to specify:
\begin{itemize}
\item[(1)] how it maps
objects of $\many$ (finite metric spaces)
into objects of $\catp$, and 
\item[(2)]
how a morphism $f:(X,d_X)\rightarrow (Y,d_Y)$ between two
objects $(X,d_X)$ and $(Y,d_Y)$ in the input category
$\many$ induces a map in the output category
$\catp$, see diagram (\ref{eq:diagram-general}).
\end{itemize}

\begin{equation}\label{eq:diagram-general}
\xymatrix{
(X,d_X) \ar[d]^{\hfunc} \ar[r]^{f} &  (Y,d_Y) \ar[d]^{\hfunc} \\
 (X,\theta_X) \ar[r]^{\hfunc(f)} & (Y,\theta_Y)} 
\end{equation}

Precise constructions will be discussed in Sections \ref{results-sc} and \ref{results-hc}, where we study different clustering algorithms, both standard or flat, and hierarchical,  using the idea of
functoriality.  

We have 3 possible ``input'' categories ordered by
inclusion (\ref{eq:inclu-cats}). The idea is that studying
functoriality over a larger category will be more stringent/demanding
than requiring functoriality over a smaller one. We will consider
different clustering algorithms and study whether they are functorial
over our choice of the input category. The least demanding one, $\underline{\mathcal
M}^{iso}$ basically enforces that clustering schemes are not dependent on the way points are labeled. We will prove  uniqueness results for functoriality over the most stringent category
$\underline{\mathcal M}^{gen}$, and finally we study how relaxing the
conditions imposed by the morphisms in $\underline{\mathcal M}^{gen}$,
namely, by restricting ourselves to the smaller but intermediate
category $\underline{\mathcal M}^{mon}$, we permit more functorial
clustering algorithms.


\section{Invariants of metric spaces}\label{sec:invariants}
Now we consider maps from a metric space into the ordered extended positive real line $\catr$ that behave well under morphisms of the categories $\miso$, $\mmon$, and $\mgen$, respectively. Such maps will be used later on for defining certain clustering functors, and the richness of such maps, or lack thereof, signals the degree to which these clustering functors exhibit different interesting behaviors. 

\begin{definition}\label{def:invariant}
Let $\many$ be any of $\mgen$, $\mmon$ or $\miso$, then  an \textbf{invariant} is any functor \mbox{$\mathfrak{I}:\many\rightarrow\catr$}.
\end{definition}
Recall that this means that $\mathfrak{I}(X)\geq \mathfrak{I}(Y)$ whenever $\mor_{\many}(X,Y)\neq \emptyset.$ Diagrammatically, this is the behavior we seek:
\begin{equation}\label{eq:diagram-invariants}
\xymatrix{
(X,d_X)\ar[r]^{\phi} \ar[d]_{\ifunc} & (Y,d_Y) \ar[d]^{\ifunc} \\
\ifunc(X) \ar[r]^{\geq} & \ifunc(Y)}.
\end{equation}

\subsection{The case of $\miso$.} The collection of invariants in $\miso$ is the largest possible among $\miso,\mmon$ and $\mgen$, and this could be seen as a consequence of the fact that $\miso$ is the category with the fewest morphisms cf. (\ref{eq:inclu-cats}). In this case all $\miso$-functorial invariants are given in the following manner: $\ifunc(X)=\Psi(d_X)$
where $\Psi:\R^{|X|}_+\times\R^{|X|}_+\rightarrow \mathbb{E}$ is such that $\Psi(A) = \Psi(\pi'\cdot A\cdot \pi)$ for any permutation matrix $\pi$ of size $|X|\times |X|$ and $A\in \R^{|X|}_+\times\R^{|X|}_+$.

\subsection{The case of $\mmon$.} 

\begin{example} A first example of a $\mmon$-invariant is any non-increasing function $\zeta:\N\rightarrow \mathbb{E}$ of the cardinality $|X|$ of a finite metric space $X$: clearly, since $\mor_{\mmon}(X,Y)\neq \emptyset$ requires that $|X|\leq |Y|$, then $\zeta(|X|)\geq \zeta(|Y|)$. Hence, the construction is functorial. 
\end{example}

Another simple example is \mbox{$\ifunc_{\mathrm{sep}}:\mmon\rightarrow \catr$}, which assigns to any finite metric space $X$ 
 its separation $\sep(X).$  
This example belongs to a larger class. 
\begin{example}
For each $k\geq 2$  let $\ifunc_k:\mmon\rightarrow\catr$ be the functor given by 
$$\ifunc_k^-(X):=\inf\big\{\varepsilon\geq 0|\,\mor_{\mmon}(\Delta_k(\varepsilon),X)\neq \emptyset\big\}$$
\mbox{for $X\in\ob(\mmon)$ with $|X|\geq k$}. If $|X|<k$ we put $\ifunc_k(X)=+\infty.$
\end{example}
 Notice that $\ifunc_{sep}=\ifunc_2^-$.

That this definition is indeed $\mmon$-functorial can be seen easily.
\begin{proof} Assume that $X,Y\in\ob(\mmon)$ and $\phi\in\mor_{\mmon}(X,Y)$, then, we need to check that $\ifunc_k^-(X)\geq \ifunc_k^-(Y)$. If $|X|,|Y| < k$, or $|X|\geq k$ and $|Y|<k$, there's nothing to prove. Let's assume that $|X|,|Y|\geq k$. Pick $\varepsilon>\ifunc_k^-(X)$ and $f\in\mor_{\mmon}(\Delta_k(\varepsilon),X)$. Then, $\phi\compcirc f \in \mor_{\mmon}(\Delta_k(\varepsilon),Y)$, and hence $\varepsilon\geq \ifunc_k^-(Y).$ The conclusion follows since $\varepsilon>\ifunc_k^-(X)$ was arbitrary.  
\end{proof}
Another family of $\mmon$-functorial invariants with a similar structure is the following. 
\begin{example}
For each $k\in\N$ define
$$\ifunc_k^+(X):=\sup\big\{\varepsilon\geq 0|\,\mor_{\mmon}(X,\Delta_k(\varepsilon))\neq \emptyset\big\}$$
if $|X|\leq k$, and $\ifunc_k^+(X)=0$ otherwise.
\end{example}
 Finally, these two examples can be generalized in the following manner.
\begin{definition}
 Let  $\Omega\subset \mathcal{M}$ be a collection of finite metric spaces and $\iota:\miso\rightarrow\catr$ any $\miso$-functorial invariant. Then, one defines
$$\mathfrak{I}_\Omega^-(X):=\inf\big\{\iota(\omega)|\,\exists\,\omega\in\Omega\,\mbox{s.t.}\,\mor_{\mmon}(\omega,X)\neq \emptyset\big\}$$
and
$$\mathfrak{I}_\Omega^+(X):=\sup\big\{\iota(\omega)|\,\exists\,\omega\in\Omega\,\mbox{s.t.}\,\mor_{\mmon}(X,\omega)\neq \emptyset\big\}.$$
\end{definition}

In these definitions we use the convention that $\inf$ over an empty set equals $+\infty$ whereas $\sup$ over an empty set equals $0.$ The proof that these these two constructions are $\mmon$-functorial follows the same line of argument as for the case of $\mathfrak{I}_k^-$ above. That these definitions generalize the ones of $\mathfrak{I}^\pm_k$ above can be seen by letting $\Omega=\{\Delta_k(\delta),\,\delta>0\}$ and $\iota$ equal to the separation invariant. 

We do not know a complete classification of all the $\mmon$-functorial invariants but it seems that progress could be made using our techniques. Notice that the family of $\mmon$-functorial invariants is rich in the sense that it is closed under addition and multiplication.

\subsection{The case of $\mgen$.} This case is particularly easy to study: since the collection of all morphisms is quite large (it is a super-set of those of $\miso$ and $\mmon$), one would expect that there are relatively few $\mgen$-functorial invariants. This is indeed the case and one has that the only $\mgen$-functorial invariants  that are possible are the constant ones: as we saw in Remark \ref{remark:mgen}, one property of the category $\mgen$ is that for any $X,Y\in\ob(\mgen)$ there exist morphisms $\phi_1\in\mor_{\mgen}(X,Y)$ and $\phi_2\in\mor_{\mgen}(Y,X)$. Functoriality of $\mathfrak{I}$ would then require that $\mathfrak{I}(X)\geq \mathfrak{I}(Y)\geq \mathfrak{I}(X),$
and hence, since $X,Y$ were arbitrary, $\mathfrak{I}$ must be constant.


\section{Standard Clustering}
\label{results-sc}

\begin{definition}
\label{ex:key} For each $\delta>0$ we define the \textbf{Vietoris-Rips clustering  functor} 
{$$\rfunc_\delta: \mgen \rightarrow
\catc$$}
as follows.  For a finite metric space $(X, d_X)$, we set $\rfunc_\delta(X,d_X)$ to be $(X,P_X(\delta))$, where
$P_X(\delta)$ is the partition of $X$ associated to the equivalence
relation $\sim_\delta$ defined in Example \ref{rips}.   We define how $\rfunc_\delta$ acts on maps $f : (X,d_X) \rightarrow (Y,d_Y)$:
$\rfunc_\delta(f)$ is simply the set map $f$ regarded as a
morphism from $(X,P_X(\delta))$ to $(Y,P_Y(\delta))$ in
$\catc$.  
\end{definition}

We now check that $\rfunc_\delta$ is in fact $\mgen$-functorial. Fix finite metric spaces $(X,d_X)$ and $(Y,d_Y)$ and a morphism $\phi\in\mor_{\mgen}(X,Y)$.  It is enough to prove that whenever $x,x'\in X$ are s.t. $x\sim_\delta x'$, then $\phi(x)\sim_\delta\phi(x')$ in $Y$ as well. Let $x=x_0,x_1,\ldots,x_n=x'$ be points in $X$ with $d_X(x_i,x_{i+1})\leq \delta$ for all $i=0,1,\ldots,n-1$. Then, since $\phi$ is distance non-increasing, $d_Y(\phi(x_i),\phi(x_{i+1}))\leq \delta$ for all $i=0,1,\ldots,n-1$ as well,  and by Definition \ref{def-equiv-rips}, then $\phi(x)\sim_\delta\phi(x')$.\footnote{The Vietoris-Rips functor is actually just \textbf{single linkage clustering} as it is well known, see \cite{carlsson-memoli-full,clust-um}. The name comes from certain simplicial constructions used by Vietoris in the 1930s to study \v Cech cohomology theories of metric spaces. Much later, Rips used similar constructions in the context of geometric group theory, see \cite{hausmann}.}

 By restricting $\rfunc_\delta$ to the subcategories 
$\miso\mbox{ and } \mmon$,
we obtain functors $\rfunc_\delta^{iso}: \miso
\rightarrow \catc$ and $\rfunc_\delta^{inj}:
\mmon \rightarrow\catc$. We will denote all these functors by $\rfunc_\delta$ when there is no ambiguity.

\begin{remark}[\bf The Vietoris-Rips functor is surjective.] Among the desirable conditions singled out by Kleinberg \cite{kleinberg}, one has that of \emph{surjectivity} (which he referred to as ``richness''). Given a finite set $X$ and $P_X\in\mathcal{P}(X)$, surjectivity calls for the existence of a metric $d_X$ on $X$ such that $\rfunc_\delta(X,d_X)=(X,P_X)$. 
Indeed this is the case: pick any $\alpha>1$ and let $d_X(x,x')=\delta$ whenever $x$ and $x'$, $x\neq x'$, are in the same block of $P_X$, and $d_X(x,x')=\alpha \delta$ when $x$ and $x'$ are in different blocks of $P_X$. With this definition it is easy to verify that $d_X$ is an ultrametric on $X$ and hence a metric. That $\rfunc_\delta(X,d_X)=(X,P_X)$ follows directly from Lemma \ref{prop:props-uaster} and the definition of the Vietoris-Rips functor.
\end{remark}

For $\many$ being any one of our choices $\miso$,
$\mmon$ or $\mgen$, a clustering functor in this context will be
denoted by $\mathfrak{C}:\many\rightarrow \catc$. \textbf{Excisiveness} of a clustering functor refers to the property that once a finite metric space has been partitioned by the clustering procedure, it should not be further split by subsequent applications of the same algorithm.

\begin{definition}[\bf Excisive clustering functors]
  We say that a clustering functor $\cfunc$ is \textbf{excisive} if for all
  $(X,d_X) \in \ob(\many)$, if we write $\mathfrak{C}(X,d_X) = (X, \{X_\alpha\}_{\alpha\in
  A})$, then
{ $$\cfunc\left(X_\alpha,{d_X}_{|_{X_\alpha \times X_\alpha}}\right) = (X_\alpha,\{X_\alpha\})\,\,\mbox{for all $\alpha\in A$.}$$}
\end{definition}

\begin{remark}[\bf The Vietoris-Rips functor is excisive]\label{remark:rips-excisive}
Fix $\delta>0$ and consider the functor $\rfunc_\delta$ defined in Example \ref{ex:key}. We claim that $\mathfrak{R}_\delta$ is excisive. Indeed, write $\mathfrak{R}_\delta(X,d) = (X,\{X_\alpha,\,\alpha\in A\})$. Then, $X_\alpha$, $\alpha\in A$ are the different blocks of the partition of $X$ into equivalence classes of $\sim_\delta$ ($x\sim_\delta x'$ if and only if $x,x'\in X_\alpha$ for some $\alpha\in A$). Fix $\alpha\in A$ and let $x,x'\in X_\alpha$. Let $x_0,x_1,\ldots,x_m\in X_\alpha$ be s.t. $x_0=x$, $x_m=x'$ and $d_X(x_i,x_{i+1})\leq \delta$ for $i\in\{0,\ldots,m-1\}$. But then it follows that $x_i\in X_\alpha$ for all $i\in\{0,\dots,m\}$ as well and hence $\mathfrak{R}_\delta\big(X_\alpha,{d_X}_{|_{X_\alpha\times X_\alpha}}\big)=(X_\alpha,\{X_\alpha\}).$
\end{remark}

There exist large families of non-excisive clustering functors in $\mmon$.

\begin{example}[\bf A family of non-excisive functors in $\mmon$]\label{example:non-excisive}

For each invariant $\ifunc:\mmon\rightarrow\catr$ (recall their construction in \S\ref{sec:invariants}) and non-increasing  function $\eta:\mathbb{E}\rightarrow\mathbb{E}$,  consider the clustering functor $\widehat{\rfunc}:\mmon\rightarrow  \catc$ defined as follows: For a finite metric space $(X, d_X)$, we define $\widehat{\rfunc}(X, d_X)$  to be $(X,\widehat{P}_X)$, 
where $\widehat{P}_X$ is the partition of $X$ associated to the equivalence relation $\sim_{\eta(\ifunc(X))}$ on $X$. That $\widehat{\rfunc}$ is a functor follows from the fact that whenever $\phi\in\mor_{\mmon}(X,Y)$ and $x\sim_{\eta(\ifunc(X))} x'$, then $\phi(x)\sim_{\eta(\ifunc(Y))}\phi(x')$. 

Indeed, let $x,x'\in X$ be s.t. $x\sim_{\eta(\ifunc(X))}x'$ and let $x=x_0,\ldots,x_n=x'$ in $X$ s.t. $d_X(x_i,x_{i+1})\leq \eta(\ifunc(X))$ for $i=0,\dots,n-1$. Then, for $i=0,\ldots,n-1$,
$d_Y(\phi(x_i),\phi(x_{i+1}))\leq d_X(x_i,x_{i+1})\leq \eta(\ifunc(X)).$
Since $\eta$ is non-increasing and $\ifunc(X)\geq \ifunc(Y)$, it follows that for $i=0,1,\ldots,n-1$,
$d_Y(\phi(x_i),\phi(x_{i+1}))\leq \eta(\ifunc(Y))$,
and hence $\phi(x)\sim_{\eta(\ifunc(Y))}\phi(x').$


Now, the functors $\widehat{\rfunc}$ are \textbf{not excisive} in general. An explicit example is the following, which applies to $\ifunc=\ifunc_{sep}$ and $\eta(z)=z^{-1}$. Consider the metric space $(X,d_X)$ depicted in figure on the right, where the metric is given by the graph metric on the underlying graph (the edge weights are shown in the figure). Note that $\textrm{sep}(X,d_X)=1/2$ and thus $\eta(\ifunc(X)) = 2.$ We then find that $\widehat{\mathfrak{R}}(X,d_X) = (X,\{\{A,B,C\},\{D,E\}\})$. But, 
$\sep\left(\{A,B,C\},\dmthree{2}{3}{1} \right) = 1$
and hence $\eta(\ifunc({\{A,B,C\}}))=1$. Therefore,
{ $$\widehat{\rfunc}\left(\{A,B,C\},\dmthree{2}{3}{1} \right)=(\{A,B,C\},\{A,\{B,C\}\}),$$}
and we see that $\{A,B,C\}$ gets further partitioned by $\widehat{\rfunc}$.
\begin{figure}
\centering
	\includegraphics[width=1\linewidth]{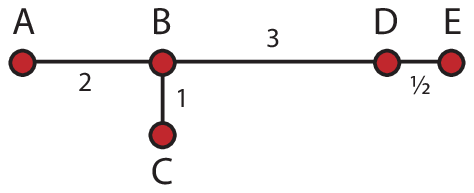}
	\caption{Metric space used to prove that the functor $\widehat{\rfunc}:\mmon\rightarrow \catc$ is not excisive. The metric is given by the graph distance on the graph.}  \label{fig:non-excisive} 
\end{figure}
\end{example}

It is interesting to point out that the same construction of a non-excisive functor in $\mgen$ would not work, since as we saw in \S\ref{sec:invariants}, all the invariants $\mathfrak{I}:\mgen\rightarrow\catr$ are constant. We will see in \S\ref{sec:mgen} that  $\mgen$ permits the existence of only one (well behaved) clustering functor which turns out to be the Vietoris-Rips functor, which is excisive as we saw in Remark \ref{remark:rips-excisive}.

\subsection{The case of \texorpdfstring{$\miso$}{Miso}}\label{sec:miso}
It is easy to describe all $\miso$-functorial clustering schemes.  Let $\mathcal{I}$ denote the collection of all isometry classes of finite metric spaces.  Clearly, one must specify the clustering functor on each class of $\mathcal{I}$ in a manner which is invariant to self-symmetries of (an element of) the class. The precise statement is as follows. For each $\zeta\in\mathcal{I}$ let $(X_\zeta,d_{X_\zeta})$ denote an element of the class $\zeta$, $G_\zeta$ the isometry group of $(X_\zeta,d_{X_\zeta})$,  and $\Xi_\zeta$ the set of all fixed points of the action of $G_\zeta$ on $\mathcal{P}(X_\zeta)$. For completeness we state as a theorem the following obvious fact:
\begin{theorem}[\textbf{Classification of $\miso$-functorial clustering schemes}]\label{theo:miso}
Any $\miso$-functorial clustering scheme determines a choice of $p_\zeta\in\Xi_\zeta$ for each $\zeta\in\mathcal{I}$, and conversely, a choice of $p_\zeta$ for each $\zeta\in\mathcal{I}$ determines an $\miso$-functorial scheme.
\end{theorem}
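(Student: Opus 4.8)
The plan is to prove the asserted correspondence directly from the definitions, keeping in mind the standing convention that a clustering functor assigns to $(X,d_X)$ a pair $(X,P_X)$ on the \emph{same} underlying set and acts as the identity on the underlying set maps of morphisms (equivalently, its composite with the forgetful functor $\catc\to\underline{Sets}$ is the forgetful functor $\miso\to\underline{Sets}$); all the examples in the paper have this form. Two pieces of elementary bookkeeping about partitions will be used repeatedly. First, for a bijection $f:X\to Y$ the pullback $f^*$ carries $\mathcal P(Y)$ bijectively onto $\mathcal P(X)$, with inverse the pushforward $f_*(Q):=(f^{-1})^*(Q)$ (blocks $f(B)$, $B$ a block of $Q$); in particular $f^*(P)$ has the same number of blocks as $P$, so if $Q$ refines $f^*(P)$ then $Q=f^*(P)$. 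Second, $(g\circ h)^*=h^*\circ g^*$ and hence $(g\circ h)_*=g_*\circ h_*$ for composable bijections. Throughout I identify the $G_\zeta$-action on $\mathcal P(X_\zeta)$ with $g\cdot P:=g_*(P)$, so that $\Xi_\zeta=\{P\in\mathcal P(X_\zeta)\mid g_*(P)=P\text{ for all }g\in G_\zeta\}$.

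\emph{A functor determines a family.} Let $\cfunc:\miso\to\catc$ be a clustering functor and write $\cfunc(X_\zeta,d_{X_\zeta})=(X_\zeta,p_\zeta)$. For $g\in G_\zeta$, the map $g$ is a morphism $(X_\zeta,d_{X_\zeta})\to(X_\zeta,d_{X_\zeta})$ in $\miso$, so by functoriality $g$ is a morphism $(X_\zeta,p_\zeta)\to(X_\zeta,p_\zeta)$ in $\catc$; by the definition of $\catc$ this says $p_\zeta$ refines $g^*(p_\zeta)$. Since $g$ is bijective, the first bookkeeping fact forces $p_\zeta=g^*(p_\zeta)$, i.e.\ $g_*(p_\zeta)=p_\zeta$. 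As $g\in G_\zeta$ was arbitrary, $p_\zeta\in\Xi_\zeta$, so $\cfunc$ determines the family $(p_\zeta)_{\zeta\in\mathcal I}$.

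\emph{A family determines a functor.} Conversely, suppose $p_\zeta\in\Xi_\zeta$ is chosen for each $\zeta\in\mathcal I$. For $(X,d_X)$ in the isometry class $\zeta$, choose an isometry $\phi:X_\zeta\to X$ and set $\cfunc(X,d_X):=(X,\phi_*(p_\zeta))$; on morphisms put $\cfunc(f):=f$. This is well defined: if $\phi'$ is another isometry $X_\zeta\to X$ then $\phi^{-1}\circ\phi'\in G_\zeta$, so $\phi'_*(p_\zeta)=\phi_*\big((\phi^{-1}\circ\phi')_*(p_\zeta)\big)=\phi_*(p_\zeta)$ because $p_\zeta$ is $G_\zeta$-fixed. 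To see that $\cfunc$ lands in $\catc$ on morphisms, let $f:(X,d_X)\to(Y,d_Y)$ be a morphism in $\miso$, hence an isometry, so $X$ and $Y$ lie in a common class $\zeta$; computing the partition on $Y$ via the isometry $f\circ\phi$ gives $P_Y=(f\circ\phi)_*(p_\zeta)=f_*(P_X)$, whence $P_X=f^*(P_Y)$ and in particular $f$ is a morphism $(X,P_X)\to(Y,P_Y)$ in $\catc$. Preservation of identities and composition is immediate, since $\cfunc$ is the identity on underlying set maps and composition in $\catc$ is composition of set maps.

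\emph{The two constructions are mutually inverse, and on the difficulty.} Starting from a family $(p_\zeta)$, building $\cfunc$ and evaluating it at the representative $(X_\zeta,d_{X_\zeta})$ via the identity isometry returns $(X_\zeta,p_\zeta)$, so the family recovered is $(p_\zeta)$ again. Starting from a functor $\cfunc$ with associated family $(p_\zeta)$ and letting $\cfunc'$ be rebuilt from it, fix $X$ in class $\zeta$ and an isometry $\phi:X_\zeta\to X$, and write $\cfunc(X,d_X)=(X,P_X)$; applying functoriality of $\cfunc$ to $\phi$ gives that $p_\zeta$ refines $\phi^*(P_X)$, applying it to $\phi^{-1}$ gives that $P_X$ refines $(\phi^{-1})^*(p_\zeta)=\phi_*(p_\zeta)$, and pushing the first forward along $\phi$ gives $\phi_*(p_\zeta)$ refines $P_X$, so $P_X=\phi_*(p_\zeta)=\cfunc'(X,d_X)$; as both functors are the identity on morphisms, $\cfunc=\cfunc'$. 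The statement is, as the paper notes, essentially a formality, so I expect no real obstacle; the only place the hypothesis on $\Xi_\zeta$ is genuinely used — rather than pure bookkeeping — is the well-definedness of the converse construction, where independence of the chosen isometry $X_\zeta\to X$ is exactly what the fixed-point condition provides, while in the forward direction it is the bijectivity of isometries together with the refinement condition built into $\catc$ that forces the output partition on $X_\zeta$ to be $G_\zeta$-invariant. The only thing to watch is keeping the pullback/pushforward conventions and the direction of the $G_\zeta$-action consistent.
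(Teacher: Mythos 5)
Your proof is correct, and it is exactly the argument the paper has in mind: the paper states this result without proof (calling it an ``obvious fact''), and your write-up simply supplies the bookkeeping it leaves implicit — bijectivity of $\miso$-morphisms forcing the refinement condition in $\catc$ to become an equality, the $G_\zeta$-fixed-point condition giving both invariance of $\cfunc(X_\zeta)$ and well-definedness of the transported partition $\phi_*(p_\zeta)$. No issues; the only caveat worth keeping is the one you already flag, namely that the bijection requires the standing convention that a clustering functor is the identity on underlying sets and set maps.
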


\subsection{Representable Clustering Functors}
In what follows, $\many$ is either of $\mmon$ or $\mgen$. For each $\delta>0$ the Vietoris-Rips functor $\rfunc_\delta:\many\rightarrow\catc$ can be described in an alternative way which relies on the ability to construct maps from the two point metric space $\Delta_2(\delta)$ into a given input finite metric space $(X,d_X)$. A first trivial observation is that the condition that $x,x'\in X$ satisfy $d_X(x,x')\leq \delta$ is equivalent to requiring the existence of a map $f\in{\mor}_{\many}(\Delta_2(\delta),X)$ with $\{x,x'\}\subset \im(f)$. Using this observation, we can reformulate the condition that $x\sim_\delta x'$ by the requirement that there exist $z_0,z_1,\ldots,z_k\in X$ with $z_0=x$, $z_k=x'$, and $f_1,f_2,\ldots,f_k\in\mor_{\many}(\Delta_2(\delta),X)$ with $\{x_{i-1},x_i\}\subset\im(f_i)$ $\forall i=1,2,\ldots,k$. Informally, this points to the interpretation that $\{\Delta_2(\delta)\}$ is the ``parameter'' in a ``generative model'' for $\rfunc_\delta$.

This immediately suggests considering more general clustering functors constructed in the following manner. Let $\Omega$ be any  fixed collection of finite metric spaces. Define a clustering functor \mbox{$\mathfrak{C}^\Omega:\many\rightarrow\catc$} as follows: let
$(X,d)\in \ob(\many)$ and write $\mathfrak{C}^\Omega(X,d) = (X,\{X_\alpha\}_{\alpha\in
A})$. One declares that points $x$ and $x'$ belong to the same
block $X_\alpha$ if and only if there exist
\begin{itemize}

  \item a sequence of points $z_0,\ldots,z_k\in X$ with $z_0 = x$ and $z_k=x'$,

  \item a sequence of metric spaces $\omega_1,\ldots,\omega_k\in
    \Omega$ and

  \item for each $i=1,\ldots,k$, pairs of points
    $(\alpha_i,\beta_i)\in \omega_i$ and morphisms $f_i\in
    {\mor}_{\many}(w_i,X)$ s.t. $f_i(\alpha_i) = z_{i-1}$ and
    $f_i(\beta_i) = z_i$.

\end{itemize}
Also, we declare that $\cfunc^\Omega(f)=f$ on morphisms $f$. Notice that above one can assume that $z_0,z_1,\ldots,z_k$ all belong to $X_\alpha$.

\begin{remark}\label{remark:rep-no-split}
Let $\Omega$ be a collection of finite metric spaces. Consider the clustering functor \mbox{$\mathfrak{C}^\Omega:\many\rightarrow \catc$} constructed as above. Then, clearly,  $\mathfrak{C}^\Omega(\omega,d_\omega)=(\omega,\{\omega\})$ for all $\omega\in\Omega$. 
As a consequence, we have the following: If $X$ is any finite metric space and $\omega\in\Omega$, then all $f\in \mor_{\many}(\omega,X)$ are s.t. $\im(f)$ is fully contained in a single block of the partition of $X$ by $\mathfrak{C}^\Omega$. 

\end{remark}

\begin{remark}\label{remark:omega}
Let $\Omega\supset \Omega'\neq \emptyset$, then $\mathfrak{C}^{\Omega'}$ produces partitions that are not coarser than those produced by $\mathfrak{C}^{\Omega}$.
\end{remark}

\begin{definition}
  We say that a clustering functor $\mathfrak{C}$ is \textbf{representable} whenever
  there exists a collection of finite metric spaces $\Omega$
  such that $\mathfrak{C} = \mathfrak{C}^\Omega$. In this case, we say that $\mathfrak{C}$ is
  \textbf{represented} by $\Omega$. We say that $\cfunc$ is \textbf{finitely representable} whenever $\cfunc=\cfunc^\Omega$ for some \emph{finite} collection of finite metric spaces $\Omega$.
\end{definition}

As we saw above, the Vietoris-Rips functor $\rfunc_\delta$ is (finitely) represented by $\{\Delta_2(\delta)\}$. 

\subsection{Representability and Excisiveness.} We now prove that representability and excisiveness are equivalent properties. Notice that excisiveness is an axiomatic statement whereas representability asserts existence of generative model for the clustering functor.

\begin{theorem} \label{theo:exc-rep}
  Let $\many$ be either of $\mmon$ or $\mgen$. Then any
  clustering functor on $\many$ is excisive if and only if it is
  representable.
\end{theorem}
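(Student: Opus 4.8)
The plan is to prove the two implications separately. The easy direction is \emph{representable $\Rightarrow$ excisive}: suppose $\cfunc = \cfunc^\Omega$ for some collection $\Omega$. Given $(X,d_X)$ write $\cfunc^\Omega(X,d_X) = (X,\{X_\alpha\}_{\alpha\in A})$ and fix a block $X_\alpha$. I must show $\cfunc^\Omega(X_\alpha, d_X|_{X_\alpha\times X_\alpha}) = (X_\alpha,\{X_\alpha\})$. For ``$\supseteq$'' (i.e.\ everything is one block): if $x,x'\in X_\alpha$ then by definition there is a chain $z_0=x,\ldots,z_k=x'$ with witnessing spaces $\omega_i\in\Omega$ and morphisms $f_i\in\mor_\many(\omega_i,X)$, and as noted in the remark following the definition of $\cfunc^\Omega$ one may assume all $z_i\in X_\alpha$; since each $\im(f_i)$ lies in a single block (Remark \ref{remark:rep-no-split}) and contains $z_{i-1},z_i\in X_\alpha$, in fact $\im(f_i)\subseteq X_\alpha$, so each $f_i$ corestricts to a morphism $\omega_i\to X_\alpha$ in $\many$ (both $\mmon$ and $\mgen$ are stable under corestriction to a subspace with the induced metric), and the same chain witnesses $x\sim x'$ inside $X_\alpha$. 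The reverse inclusion ``$\subseteq$'' is automatic since the partition produced by $\cfunc^\Omega$ on any space is a refinement of the one-block partition only in the trivial sense — one checks directly that distinct blocks of $\cfunc^\Omega(X_\alpha,\cdot)$ would contradict the maximality of the chain relation; concretely, the relation ``connected by such a chain'' is an equivalence relation, and on $X_\alpha$ all points are already equivalent, so there is exactly one block.

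The substantive direction is \emph{excisive $\Rightarrow$ representable}. Given an excisive clustering functor $\cfunc$, the natural candidate is to let $\Omega = \Omega_\cfunc$ be the collection of all finite metric spaces $\omega$ that $\cfunc$ sends to the one-block partition: $\Omega_\cfunc := \{\,\omega\in\mathcal M : \cfunc(\omega,d_\omega) = (\omega,\{\omega\})\,\}$. I then must show $\cfunc = \cfunc^{\Omega_\cfunc}$, i.e.\ for every $(X,d_X)$ the two partitions coincide. Write $\cfunc(X,d_X) = (X,\{X_\alpha\})$ and $\cfunc^{\Omega_\cfunc}(X,d_X) = (X,\{X'_\beta\})$.

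For $\cfunc^{\Omega_\cfunc} \preceq \cfunc$ (every $\cfunc^{\Omega_\cfunc}$-block is contained in a $\cfunc$-block): it suffices to show that whenever $\omega\in\Omega_\cfunc$ and $f\in\mor_\many(\omega,X)$, the image $\im(f)$ lies in a single $\cfunc$-block. This is exactly where functoriality of $\cfunc$ enters: $\cfunc(f):\cfunc(\omega,d_\omega)\to\cfunc(X,d_X)$ is a morphism in $\catc$, and since $\cfunc(\omega,d_\omega)$ is the one-block partition, the defining property of morphisms in $\catc$ forces the one block of $\omega$ to map into a single block of $\cfunc(X,d_X)$ — that is, $\{\omega\}$ refines $f^*(\{X_\alpha\})$ means $f^{-1}$ of the block containing $f(\text{anything})$ is all of $\omega$, so $\im f$ sits inside one $X_\alpha$. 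Chaining this along a $\cfunc^{\Omega_\cfunc}$-witnessing sequence shows the two endpoints lie in the same $\cfunc$-block.

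For the reverse refinement $\cfunc \preceq \cfunc^{\Omega_\cfunc}$ (every $\cfunc$-block is contained in a $\cfunc^{\Omega_\cfunc}$-block): here excisiveness is the crucial hypothesis. Fix a $\cfunc$-block $X_\alpha$; by excisiveness $\cfunc(X_\alpha, d_X|_{X_\alpha\times X_\alpha}) = (X_\alpha,\{X_\alpha\})$, which says precisely that $X_\alpha$ (with its induced metric) belongs to $\Omega_\cfunc$. The identity inclusion $\iota: X_\alpha \hookrightarrow X$ is a morphism in $\many$ (it is injective and distance-preserving, hence in $\mmon$, and a fortiori in $\mgen$), with image $X_\alpha$; so taking the witnessing sequence consisting of a single space $\omega_1 = X_\alpha$ and the single morphism $\iota$ shows that any two points of $X_\alpha$ are $\cfunc^{\Omega_\cfunc}$-equivalent, i.e.\ $X_\alpha$ lies inside one $X'_\beta$. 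Combining the two refinements gives equality of the partitions for every $X$, and since both functors act as $f\mapsto f$ on morphisms, $\cfunc = \cfunc^{\Omega_\cfunc}$.

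The main obstacle I anticipate is the $\cfunc^{\Omega_\cfunc}\preceq\cfunc$ direction: one must be careful that ``$\im f$ lies in a single block'' really follows from functoriality together with the $\catc$-morphism condition (refinement of $f^*$), and that the chain argument is valid — but both reduce to unwinding definitions. A secondary subtlety worth a sentence is checking that subspaces with induced metric give objects in $\many$ and that inclusions and corestrictions of morphisms stay in $\many$; this holds for both $\mmon$ and $\mgen$ but should be stated explicitly, since it is what makes $\Omega_\cfunc$ and the inclusion maps available as witnesses. No single step requires a hard computation; the content is entirely in choosing $\Omega_\cfunc$ correctly and matching excisiveness to membership in $\Omega_\cfunc$ via the induced-metric inclusion.
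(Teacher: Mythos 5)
Your proof is correct and follows essentially the same route as the paper's: the paper takes $\Omega$ to be the collection of all blocks (with restricted metrics) of all partitions produced by $\cfunc$, which under excisiveness coincides with your $\Omega_\cfunc$ of one-block spaces, and both implications then proceed by the same mutual-refinement, chaining, and corestriction arguments. The only cosmetic difference is that you establish $\cfunc\preceq\cfunc^{\Omega_\cfunc}$ by using the inclusion $X_\alpha\hookrightarrow X$ directly as a one-step witnessing chain, whereas the paper invokes functoriality of $\cfunc^{\Omega}$ applied to that same inclusion; these are interchangeable.
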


\begin{proof}
Assume that $\mathfrak{C}$ is an excisive clustering functor. Let 
\beq{eq:omega}\Omega:=\big\{(B,d_{|_{B\times B}}),\, B\in P,\, \mbox{where $(X,P)=\mathfrak{C}(X,d)$ and $(X,d)\in \mathcal{M}$}\big\}.\eeq
That is, $\Omega$ contains all the possible blocks (endowed with restricted metrics) of all partitions obtained by applying $\mathfrak{C}$ to all finite metric spaces. Notice that by definition of $\Omega$, and since $\mathfrak{C}$ is excisive, 
\beq{eq:excisive-omega}\mathfrak{C}(\omega,d_\omega) = (\omega,\{\omega\})\,\mbox{for all $\omega\in\Omega$}.\eeq 
We will now prove that $\mathfrak{C}$ is represented by $\Omega$, that is $\mathfrak{C}=\mathfrak{C}^\Omega$.  Fix any finite metric space $(X,d_X)$ and write $(X,P)=\mathfrak{C}(X,d_X)$ and $(X,P')=\mathfrak{C}^\Omega(X,d_X)$. We know that $\Omega$ contains all blocks $B\in P$, and since the inclusion $B\hookrightarrow X$ is a morphism in $\mmon$, by Remark \ref{remark:rep-no-split} one sees that  $\mathfrak{C}^\Omega(B,{d_{X}}_{|_{B\times B}})=(B,\{B\})$. Now,  by functoriality of $\mathfrak{C}^\Omega$,  any two points $x,x'$ in $B$ must belong to the same block of $P'$.  Since $B$ was any block of $P$ it follows  that $P$ is a refinement of $P'$.

This is depicted by the diagram below:
\begin{equation}\label{eq:diagram-i}
\xymatrix{
(B,{d_{X}}_{|_{B\times B}}) \ar[r]^{\hookrightarrow} \ar[d]_{\mathfrak{C}^\Omega} & (X,d_X) \ar[d]^{\mathfrak{C}^\Omega} \\
(B,\{B\}) \ar[r]^{\mathfrak{C}^\Omega(\hookrightarrow)} & (X,P')}
\end{equation}


Conversely, assume $x,x'\in B'\in P'$ and let $\omega_1,\ldots,\omega_k\in \Omega$, $(\alpha_i,\beta_i)\in \omega_i$ for $i=1,2,\ldots,k$, $z_0,z_1,\ldots,z_k \in X$  and $f_1,f_2,\ldots,f_k$ s.t. $f_i\in \mor_{\many}(\omega_i,X)$, $f_i(\alpha_i)=z_{i-1}$ and $f_i(\beta_i)=z_i$ for $i=1,2,\ldots,k$. Consider for each $i\in\{1,2,\ldots,k\}$ the diagram on the right, where we have used (\ref{eq:excisive-omega}).  Then, since for each $i\in\{1,\ldots,k\}$ $\alpha_i,\beta_i$ are in the same (unique) block of $\omega_i$, $z_{i-1}$ and $z_i$ have to be in  the same block of $P$.

$$\xymatrix{
(\omega_i,d_{\omega_i}) \ar[r]^{f_i} \ar[d]_{\mathfrak{C}} & (X,d_X) \ar[d]^{\mathfrak{C}} \\
(\omega_i,\{\omega_i\}) \ar[r]^{\mathfrak{C}(f_i)} & (X,P)}.$$


Since this has to happen for all $i$, we conclude that $x=z_0$ and $z_k=x'$ ought to be in the same block of $P$. Thus $P'$ is a refinement of $P$. We have proved that $P$ refines and is refined by $P'$ and therefore these two partitions must be equal. Since $X$ was any finite metric space, we conclude that $\mathfrak{C}$ is represented by $\Omega$ and we are done.

Assume now that $\cfunc$ is representable, i.e. $\mathfrak{C}=\mathfrak{C}^\Omega$ for some collection of finite metric spaces $\Omega$. We will prove that $\cfunc$ is excisive. Fix a finite metric space $(X,d_X)$ and write $\mathfrak{C}^\Omega(X,d_X)=(X,P)$. Pick any block $B$ of $P$ and any two points $x,x'\in B$. Let $\omega_1,\ldots,\omega_k\in \Omega$, 
$f_1\in\mor_{\many}(\omega_1,X),\ldots,f_k\in\mor_{\many}(\omega_k,X);$ $(\alpha_1,\beta_1)\in\omega_1,\ldots,(\alpha_k,\beta_k)\in\omega_k$; and $z_0,z_1,\ldots,z_k$ be s.t. $z_0=x$, $z_k=x'$, $f_i(\alpha_i)=z_{i-1}$, $f_i(\beta_i)=z_i$ for all $i=1,2,\ldots,k$. One can assume that all $z_i\in B$ and moreover, by Remark \ref{remark:rep-no-split},  $\im(f_i)\subseteq B$. Hence, we can regard each $f_i$ as a morphism in $\mor_{\many}(\omega_i,B)$ where $B$ is endowed with the restricted metric. 

By Remark \ref{remark:rep-no-split}, $\cfunc(\omega,d_\omega)=(\omega,\{\omega\})$ for all $\omega\in\Omega$. Hence, by functoriality we have that $x,x'\in B$ are in the same block of $P_B$, where $(B,P_B) = \mathfrak{C}^\Omega(B,{d_X}_{|_{B\times B}})$. Since $x,x'\in B$ were arbitrary we conclude that $P_B$ consists of only one block. The arbitrariness in the choices of  $B\in P$  and $X\in\ob(\many)$ finishes the proof that $\cfunc=\mathfrak{C}^\Omega$ is excisive.
\end{proof}

\subsection{A factorization theorem.} \label{section:factor}
 We now prove that all finitely representable clustering functors in $\mgen$ and $\mmon$ arise as the composition of the Vietoris-Rips functor with a functor that changes the metric. 

For a given collection $\Omega$ of finite metric spaces let 
\beq{eq:Tfunc}\tfunc^\Omega:\many\rightarrow \many
\eeq be the endofunctor that assigns to each finite metric space $(X,d_X)$ the metric space $(X,d_X^\Omega)$ with the same underlying set and metric given by $d_X^\Omega=\mathcal{U}(W_X^\Omega)$,\footnote{Recall (\ref{eq:U}), the definition of $\mathcal{U}$.} where $W_X^\Omega:X\times X\rightarrow \R_+$ is given by
\beq{eq:W}(x,x')\mapsto \inf\big\{\lambda>0|\,\exists w\in\Omega\,\mbox{and}\,\phi\in\mor_{\many}(\lambda\cdot\omega,X)\,\mbox{with $\{x,x'\}\subset\im(\phi)$}\big\},\eeq
for $x\neq x'$, and by $0$ on $\mathrm{diag}(X\times X)$. Above we assume that the $\inf$ over the empty set equals $+\infty$. Note that $W_X^\Omega(x,x')<\infty$ for all $x,x'\in X$ as long as $|\omega|\leq |X|$ for some $\omega\in \Omega$. Also, $W_X^\Omega(x,x')=\infty$ for all $x\neq x'$ when $|X|<\inf\{|\omega|,\,\omega\in\Omega\}.$

\begin{theorem}\label{theo:comp-rips}
Let $\many$ be either $\mgen$ or $\mmon$ and $\cfunc$ be any $\many$-functorial finitely representable clustering functor represented by some $\Omega\subset \mathcal{M}$. Then, $\cfunc=\rfunc_1\compcirc\tfunc^\Omega$.
\end{theorem}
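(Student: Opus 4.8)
The plan is to show that both functors $\cfunc = \cfunc^\Omega$ and $\rfunc_1 \compcirc \tfunc^\Omega$ produce the same partition on every finite metric space $(X,d_X)$ (they clearly agree on morphisms, since both act as the identity map $f \mapsto f$). So fix $(X,d_X)$, write $(X,P) = \cfunc^\Omega(X,d_X)$ and $(X,Q) = \rfunc_1(\tfunc^\Omega(X,d_X)) = \rfunc_1(X, d_X^\Omega)$, and the goal is $P = Q$. The key bookkeeping device is the function $W_X^\Omega$ of (\ref{eq:W}): I would first establish the elementary translation that for $x \neq x'$, one has $W_X^\Omega(x,x') \le 1$ if and only if there exist $w \in \Omega$ and $\phi \in \mor_{\many}(\lambda \cdot w, X)$ with $\lambda \le 1$ (equivalently, after rescaling, a morphism from some $\lambda \cdot w$ with $\lambda$ arbitrarily close to $1$ from below, or exactly $1$) whose image contains $\{x,x'\}$. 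Here I must be slightly careful about whether the infimum defining $W_X^\Omega$ is attained; since $\Omega$ is \emph{finite} and each $\mor_{\many}(\lambda \cdot w, X)$ is a finite set that can only shrink as $\lambda$ decreases, the set of admissible $\lambda$ is of the form $[\lambda_0, \infty)$ or $(\lambda_0,\infty)$, and a short argument (the relevant morphism sets stabilize) shows we may take the "witnessing" morphism to come from $1 \cdot w = w$ precisely when $W_X^\Omega(x,x') \le 1$. This is the point I'd flag as the main technical obstacle — reconciling the open infimum in (\ref{eq:W}) with the closed condition "$d_X^\Omega(x,x') \le 1$" needed by $\rfunc_1$, and with the role of morphisms from \emph{unscaled} $w$ in the definition of $\cfunc^\Omega$.

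Granting that translation, the argument splits into two containments. First, $Q$ refines $P$: suppose $x \sim^{d_X^\Omega}_1 x'$, i.e. there is a chain $x = z_0, z_1, \dots, z_k = x'$ with $d_X^\Omega(z_{i-1}, z_i) \le 1$ for all $i$. By definition $d_X^\Omega = \mathcal{U}(W_X^\Omega)$, so each such $\le 1$ step is itself witnessed by a sub-chain along which $W_X^\Omega \le 1$ (using (\ref{eq:U})); concatenating, we get a single chain $x = y_0, y_1, \dots, y_m = x'$ with $W_X^\Omega(y_{j-1}, y_j) \le 1$ for all $j$. For each such consecutive pair, the translation gives $w_j \in \Omega$ and a morphism $f_j \in \mor_{\many}(w_j, X)$ with $\{y_{j-1}, y_j\} \subseteq \im(f_j)$; picking the two points of $w_j$ mapping to them as $(\alpha_j, \beta_j)$, this chain of data is exactly what the definition of $\cfunc^\Omega$ requires to conclude $x$ and $x'$ lie in the same block of $P$. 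Hence $Q$ refines $P$.

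Conversely, $P$ refines $Q$: suppose $x, x'$ lie in the same block of $P$, so there are $z_0 = x, \dots, z_k = x'$, spaces $w_1, \dots, w_k \in \Omega$, pairs $(\alpha_i,\beta_i) \in w_i$, and morphisms $f_i \in \mor_{\many}(w_i, X)$ with $f_i(\alpha_i) = z_{i-1}$, $f_i(\beta_i) = z_i$. Here I want to recover that each consecutive pair $z_{i-1}, z_i$ is $\sim^{d_X^\Omega}_1$-equivalent. A morphism from $w_i$ into $X$ is in particular a morphism from $1 \cdot w_i$, so by (\ref{eq:W}) we get $W_X^\Omega(z_{i-1}, z_i) \le 1$ whenever $z_{i-1} \neq z_i$; this needs the convention-level check that $W_X^\Omega$ is defined using morphisms out of $\lambda \cdot w$ for $\lambda > 0$, and $\lambda = 1$ is admissible — again the only subtlety is the strict inequality $\lambda > 0$ in (\ref{eq:W}) versus wanting $\lambda = 1$, which is fine since $1 > 0$. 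Then $d_X^\Omega(z_{i-1}, z_i) = \mathcal{U}(W_X^\Omega)(z_{i-1},z_i) \le W_X^\Omega(z_{i-1},z_i) \le 1$ (the maximal sub-dominant ultrametric is dominated by $W$ itself, taking the trivial two-point chain in (\ref{eq:U})), so $z_{i-1} \sim^{d_X^\Omega}_1 z_i$; chaining over $i$ gives $x \sim^{d_X^\Omega}_1 x'$, i.e. $x, x'$ are in the same block of $Q$. Combining the two containments yields $P = Q$, and since $(X,d_X)$ was arbitrary and both functors agree on morphisms, $\cfunc = \rfunc_1 \compcirc \tfunc^\Omega$.

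One remark I would insert to keep the exposition honest: the finiteness of $\Omega$ is used (only) to guarantee $d_X^\Omega$ takes finite values on spaces large enough to admit a morphism from some $w \in \Omega$, and more importantly so that the infimum-versus-minimum issue in the translation step is harmless; without finiteness one would need to track $\varepsilon$-approximate witnesses, and the clean statement $\cfunc = \rfunc_1 \compcirc \tfunc^\Omega$ could fail on the boundary. I would therefore present the translation lemma (the "$W_X^\Omega(x,x') \le 1 \iff \exists\, w \in \Omega, \phi \in \mor_{\many}(w,X), \{x,x'\} \subseteq \im\phi$" equivalence) as a clearly stated auxiliary claim before the two-containment argument, since that is where essentially all the real content — and the only place the hypotheses genuinely bite — resides.
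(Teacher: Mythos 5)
Your proof is correct and follows essentially the same route as the paper's: the same two refinement containments, the same use of $d_X^\Omega \leq W_X^\Omega$ and of concatenated chains from the definition of $\mathcal{U}$, and the same observation that a morphism out of $\lambda\cdot\omega$ with $\lambda\leq 1$ is also a morphism out of $\omega$. Your explicit "translation lemma" isolating the infimum-versus-minimum issue (where finiteness of $\Omega$ is used) is exactly the step the paper compresses into the phrase "because $\Omega$ is finite," so making it a stated auxiliary claim is a reasonable expository improvement but not a different argument.
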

\begin{proof}
Fix any finite metric space $(X,d_X)$ and write $\cfunc(X,d_X)=(X,P)$ and $\rfunc_1\compcirc\tfunc^\Omega(X,d_X)=(X,P').$
Assume that $x,x'\in B\in P$ and let $x=z_0,z_1,\ldots,z_k=x'\in X$, $\omega_1,\ldots,\omega_k\in \Omega$, $f_i\in\mor_{\many}(\omega_i,X)$ and $(\alpha_i,\beta_i)\in\omega_i$ s.t. $f_i(\alpha_i)=z_{i-1}$, $f_i(\beta_i)=z_i$ for $i=1,2,\ldots,k.$ Then, from (\ref{eq:W}) for all $i=1,2,\ldots,k$, $W_X^\Omega(z_{i-1},z_{i})\leq 1$ and in particular $\max_i W_X^\Omega(z_{i-1},z_i)\leq 1$. It follows from (\ref{eq:U}) that $d_X^\Omega(x,x')\leq 1$ and consequently $x$ and $x'$ are in the same block of $P'$. Thus we see that $P$ refines $P'$.

Assume now that $x,x'\in B'\in P'$. By definition of the Vietoris-Rips functor, $x\sim_1 x'$ in $(X,d_X^\Omega)$,  which implies the existence of $x=x_0,x_1,\ldots,x_k=x'$ in $X$ with $d_X^\Omega(x_i,x_{i+1})\leq 1$ for all $i=1,\ldots,k-1$. Now, by (\ref{eq:U}), for each $i=0,1,\ldots,k$ there exist $y_0^{\scriptscriptstyle(i)},y_1^{\scriptscriptstyle (i)},\ldots,y_{k_i}^{\scriptscriptstyle(i)}$ in $X$ with $y_0^{\scriptscriptstyle(i)}=x_i$, $y_{k_i}^{\scriptscriptstyle(i)}=x_{i+1}$ and $W_X^\Omega\big(y_{j}^{\scriptscriptstyle(i)},y_{j+1}^{\scriptscriptstyle(i)}\big)\leq 1$ for $j=0,1,\ldots,k_i-1$. By concatenating all sequences $\big\{y_j^{\scriptscriptstyle(i)}\big\}_{j=1}^{k_i}$ for $i=0,1,\ldots,k$, we obtain a sequence $z_0,z_1,\ldots, z_N \in X$ with $z_0=x$, $z_N=x'$ and $W_X^\Omega(z_i,z_{i+1})\leq 1$.  By definition of $W_X^\Omega$ and because $\Omega$ is finite,  one sees that for every $i=1,2,\ldots, N$ there exists $\lambda_i\in(0,1]$, $\omega_i\in\Omega$ and $\phi_i\in\mor_{\many}(\lambda_i\cdot\omega_i,X)$ with $\{z_{i-1},z_i\}\subset\im(\phi_i).$ But then, obviously, $\phi_i\in\mor_{\many}(\omega_i,X)$ as well, and since $\cfunc$ is represented by $\Omega$ one sees that $x$ and $x'$ must be in the same block of $P.$ This proves that $P'$ refines $P$, and finishes the proof.
\end{proof}

\subsection{The case of \texorpdfstring{$\mgen$}{Mgen}}\label{sec:mgen}

\subsubsection{A Uniqueness theorem.} In $\mgen$ clustering functors are very restricted, as reflected by the following theorem.
\begin{theorem}\label{theo:unique}
Assume that $\mathfrak{C}:\mgen\rightarrow\catc$ is a clustering functor for which there exists $\delta_{\mathfrak{C}}> 0$ with the property that 
\begin{itemize}
\item $\mathfrak{C}(\Delta_2(\delta))$ is in one piece for all $\delta\in [0,\delta_\mathfrak{C}]$, and
\item $\mathfrak{C}(\Delta_2(\delta))$ is in two pieces for all $\delta> \delta_\mathfrak{C}$.
\end{itemize}
Then, $\cfunc$ is the Vietoris-Rips functor with parameter $\delta_{\mathfrak{C}}$. i.e. $\cfunc=\rfunc_{\delta_\cfunc}$.
\end{theorem}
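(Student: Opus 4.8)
The plan is to show the two partitions agree by proving mutual refinement, using the hypothesis on $\mathfrak{C}(\Delta_2(\delta))$ together with $\mgen$-functoriality to control pairwise distances. Fix a finite metric space $(X,d_X)$, write $\mathfrak{C}(X,d_X) = (X,P)$ and $\rfunc_{\delta_\cfunc}(X,d_X) = (X,P')$, so that blocks of $P'$ are the equivalence classes of $\sim_{\delta_\cfunc}$.

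First I would show $P'$ refines $P$. Suppose $x \sim_{\delta_\cfunc} x'$, so there is a chain $x = x_0, x_1, \ldots, x_k = x'$ with $d_X(x_i,x_{i+1}) \le \delta_\cfunc$. For each $i$, the two-point space $\Delta_2(d_X(x_i,x_{i+1}))$ admits a morphism into $X$ in $\mgen$ sending the two points to $x_i$ and $x_{i+1}$ (this is the trivial observation used for representability). Since $d_X(x_i,x_{i+1}) \le \delta_\cfunc$, the hypothesis says $\mathfrak{C}(\Delta_2(d_X(x_i,x_{i+1})))$ is in one piece, so by functoriality of $\mathfrak{C}$ (apply $\mathfrak{C}$ to the morphism $\Delta_2(d_X(x_i,x_{i+1})) \to X$ and chase the diagram, as in the proof of Theorem \ref{theo:exc-rep}), $x_i$ and $x_{i+1}$ lie in the same block of $P$. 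Concatenating, $x$ and $x'$ lie in the same block of $P$.

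For the converse, I would show $P$ refines $P'$, i.e. that $\mathfrak{C}$ never merges points that $\sim_{\delta_\cfunc}$ keeps apart. Suppose $x, x' \in X$ lie in different blocks of $P'$; write $B = [x]_{\delta_\cfunc}$ and $B' = [x']_{\delta_\cfunc}$. By Lemma \ref{prop:props-uaster}, $d_X(a,a') > \delta_\cfunc$ for all $a \in B$, $a' \in B'$. The key move is to build a $\mgen$-morphism from $(X,d_X)$ to a two-point space $\Delta_2(\delta)$ with $\delta > \delta_\cfunc$ that separates $B$ from $B'$: define $g: X \to \Delta_2(\delta) = \{p,q\}$ by sending every point of $[x]_{\delta_\cfunc}$ to $p$ (say) and every other point to $q$, where $\delta := \min\{d_X(a,a') : a \in [x]_{\delta_\cfunc},\ a' \notin [x]_{\delta_\cfunc}\} > \delta_\cfunc$. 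Then $g$ is distance non-increasing: points mapped to the same point contribute $0$, and points mapped to different points are at $X$-distance $\ge \delta$ by construction. By the hypothesis, $\mathfrak{C}(\Delta_2(\delta))$ is in two pieces (as $\delta > \delta_\cfunc$), i.e. $\mathfrak{C}(\Delta_2(\delta)) = (\{p,q\}, \{\{p\},\{q\}\})$. Functoriality of $\mathfrak{C}$ applied to $g$ gives a morphism in $\catc$ from $(X,P)$ to $(\{p,q\},\{\{p\},\{q\}\})$, which by definition of $\mor_{\catc}$ forces $P$ to refine $g^*(\{\{p\},\{q\}\}) = \{[x]_{\delta_\cfunc}, X \setminus [x]_{\delta_\cfunc}\}$; in particular $x$ and $x'$, being in different blocks of this partition, lie in different blocks of $P$. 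Hence $P$ refines $P'$, and combining the two directions gives $P = P'$. Since it is also immediate that $\mathfrak{C}$ and $\rfunc_{\delta_\cfunc}$ agree on morphisms (both act as the underlying set map), we conclude $\mathfrak{C} = \rfunc_{\delta_\cfunc}$.

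The main obstacle is the converse direction: one must exhibit, for an arbitrary pair of $\sim_{\delta_\cfunc}$-separated points, a morphism out of $X$ into some $\Delta_2(\delta)$ with $\delta > \delta_\cfunc$ that witnesses the separation, and verify it is genuinely distance non-increasing — this is exactly where Lemma \ref{prop:props-uaster} is needed (to guarantee the "cross distances" between the chosen block and its complement strictly exceed $\delta_\cfunc$, so that one can pick such a $\delta$). The collapsing nature of $\mgen$-morphisms, which elsewhere in the paper is a limitation, is precisely what makes this construction available; in $\mmon$ one could not collapse $X\setminus[x]_{\delta_\cfunc}$ to a point, which is consistent with the fact that the uniqueness statement is special to $\mgen$.
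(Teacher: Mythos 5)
Your proposal is correct and follows essentially the same argument as the paper: the first direction maps two-point spaces into $X$ along the chain (the paper uses $\Delta_2(\delta_\cfunc)$ where you use $\Delta_2(d_X(x_i,x_{i+1}))$, an immaterial difference), and the second direction is the paper's construction verbatim — collapse $X$ onto $\Delta_2(\delta)$ with $\delta$ the minimal cross-distance from $[x]_{\delta_\cfunc}$ to its complement, using Lemma \ref{prop:props-uaster} to get $\delta>\delta_\cfunc$.
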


Recall that the Vietoris-Rips functor is excisive, cf. Remark \ref{remark:rips-excisive}. 
\begin{remark}[About uniqueness in $\mgen$]\label{app:rips-strict}
For each $\delta>0$ consider the clustering functor $\stackrel{\circ}{\rfunc}_\delta$ that assigns points $x,x'$ in a metric space $(X,d_X)$ to the same block if and only if there exist $x_0,x_1,\ldots,x_k\in X$ with $x_0=x$, $x_k=x'$ and $d_X(x_i,x_{i+1})<\delta$ for $i=0,1,\ldots,k-1$. Note the \emph{strict} inequality, cf. with the definition of the Vietoris-Rips functor in Example \ref{ex:key}.

If one changes the conditions in Theorem \ref{theo:unique} in a way such that 
\begin{itemize}
\item $\mathfrak{C}(\Delta_2(\delta))$ is in one piece for all $\delta\in [0,\delta_\mathfrak{C})$, and
\item $\mathfrak{C}(\Delta_2(\delta))$ is in two pieces for all $\delta\geq \delta_\mathfrak{C}$,
\end{itemize}

then, $\cfunc=\stackrel{\circ}{\rfunc}_{\delta_\cfunc}$.
\end{remark}

\begin{proof}[Proof of Theorem \ref{theo:unique}]
Fix a finite metric space $(X,d_X)$ and write $(X,P)=\cfunc(X,d_X)$ and $(X,P')=\rfunc_{\delta_\cfunc}(X,d_X).$

\noindent
(1) Let $x,x'\in B$ for some block $B\in P'$. We will prove that then $x,x'$ are contained in the same block of $P$. Let $x_0,x_1,\ldots,x_k\in X$ be s.t. $x_0=x$, $x_k=x'$ and $d_X(x_i,x_{i+1})\leq \delta_\cfunc$ for all $i\in\{1,\ldots,k-1\}$. We now prove that each pair $\{x_i,x_{i+1}\}$ belongs to the same block of $P'$, what will yield the claim. Indeed, since for each $i$, $d_X(x_i,x_{i+1})\leq \delta_\cfunc$, then for each $i$ there exists  $f_i\in\mor_{\mgen}(\Delta_2(\delta_\cfunc),X)$ with $\im(f_i) = \{x_i,x_{i+1}\}$. Thus, by functoriality and the definition of $\delta_\cfunc$, $x_i$ and $x_{i+1}$ must fall in the same block of $P'$. This concludes the first half of the proof.

\noindent
(2) Assume that $x_0,x_0'\in X$ are in different blocks of $P'$. We will prove that then $x_0$ and $x_0'$ must belong to different blocks of $P$.
Let $\delta := \min\big\{d_X(x,x'),\,x\in X\backslash[x_0']_{\delta_\cfunc},\,x'\in[x_0']_{\delta_\cfunc}\big\}.$
Notice that by Lemma \ref{prop:props-uaster}, $\delta>\delta_\cfunc$. Write 
$\Delta_2(\delta) = \Big(\{p,q\},\dmtwo{\delta}\Big)$
and define the map \mbox{$\phi:X\rightarrow \Delta_2(\delta)$} given by  $\phi(x) = p$ for all $x\in X\backslash [x_0']_{\delta_\cfunc}$, and $\phi(x') = q$ for all $x'\in [x_0']_{\delta_\cfunc}$. We claim that $\phi\in\mor_{\mgen}(X,\Delta_2(\delta))$. Indeed, if $x,x'\in X\backslash [x_0']_{\delta_\cfunc}$ or $x,x'\in [x_0']_{\delta_\cfunc}$, $\phi(x)=\phi(x')$ and there's nothing to check. Assume then that $x\in X\backslash [x_0']_{\delta_\cfunc}$ and $x'\in [x_0']_{\delta_\cfunc}$, and note that by definition of  $\delta$, 
$d_X(x,x')\geq\delta$. On the other hand, $p=\phi(x)$ and $q=\phi(x')$ are at distance $\delta$ from each other. Thus $\phi$ is distance non-increasing. Now, we apply $\cfunc$ to $X$ and $\Delta_2(\delta)$ and note that in this case, by our assumption on $\cfunc$, since $\delta>\delta_\cfunc$, then  $\cfunc(\Delta_2(\delta))$ is in two pieces. By functoriality we conclude that $x_0$ and $x_0'$ cannot lie in the same block of $P.$
\end{proof}

\subsection{Scale invariance in $\mgen$ and $\mmon$.} It is interesting to consider the effect of imposing Kleinberg's scale invariance axiom on $\mgen$-functorial and  $\mmon$-functorial clustering schemes. It turns out that in $\mgen$ there are only two possible clustering schemes enjoying scale invariance, which turn out to be the trivial ones: 
\begin{theorem}\label{theorem:scale-invariance}
Let $\cfunc:\mgen\rightarrow\catc$ be a clustering functor s.t. $\cfunc\compcirc \sigma_\lambda=\cfunc$ for all $\lambda>0$. Then, either
\begin{itemize}
\item $\cfunc$ assigns to each finite metric space $X$ the partition of $X$ into singletons, or
\item $\cfunc$ assigns to each finite metric the partition with only one block.
\end{itemize}
\end{theorem}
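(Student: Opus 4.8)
The plan is to exploit the uniqueness theorem (Theorem~\ref{theo:unique}) together with the fact that scale invariance forces the behavior of $\cfunc$ on the two-point spaces $\Delta_2(\delta)$ to be constant in $\delta$. First I would observe that $\cfunc(\Delta_2(\delta))$ is a partition of a two-element set, so it is either the one-block partition or the partition into two singletons; call these the ``connected'' and ``disconnected'' outcomes. Since $\sigma_\lambda(\Delta_2(\delta)) = \Delta_2(\lambda\delta)$ and $\cfunc\compcirc\sigma_\lambda = \cfunc$, the outcome for $\Delta_2(\delta)$ is independent of $\delta > 0$. This leaves exactly two cases.

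In the first case, $\cfunc(\Delta_2(\delta))$ is disconnected for every $\delta > 0$. I would then claim $\cfunc$ assigns to every finite metric space the partition into singletons. Indeed, take any $X$ and any two distinct points $x, x' \in X$; set $\delta = d_X(x,x')$ and consider the map $\phi \colon X \to \Delta_2(\delta)$ collapsing everything to $\{p,q\}$ by $x\mapsto p$ and sending all other points appropriately --- more simply, use the argument structure from part~(2) of the proof of Theorem~\ref{theo:unique}: there is a distance non-increasing map $X \to \Delta_2(\delta)$ separating $x$ from $x'$ (e.g.\ send $x'$ to $q$ and all other points to $p$, which is distance non-increasing since $d_X(x,x')=\delta$ and any other pair is at distance $\le \diam(X)$... wait, one must be careful: collapsing to two points is only distance non-increasing if the two preimage ``clumps'' are at distance $\ge \delta$). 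The cleanest fix is to take $\delta' = d_X(x,x')$ and the map $\phi$ sending $x \mapsto p$, everything else $\mapsto q$; this is distance non-increasing precisely because $d_{\Delta_2(\delta')}(p,q) = \delta' = d_X(x,x') \le d_X(x,z)$ is \emph{not} automatic. Instead I would use: by Remark~\ref{remark:mgen} there is always a morphism, but here I need the right one --- so I invoke scale invariance to rescale $X$ so that $x,x'$ realize the \emph{diameter}, i.e.\ work with $\lambda\cdot X$ for suitable $\lambda$; then the two-point quotient collapsing $x'$ away from $x$ is genuinely distance non-increasing, and functoriality applied to $\lambda\cdot X \to \Delta_2(\mathrm{something})$ together with the disconnectedness of all $\Delta_2(\delta)$ forces $x,x'$ into different blocks. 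Since $x,x'$ were arbitrary, $\cfunc(X)$ is the singleton partition.

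In the second case, $\cfunc(\Delta_2(\delta))$ is connected for all $\delta > 0$. Here I would argue that $\cfunc$ gives the one-block partition on every $X$. Take distinct $x, x' \in X$, let $\delta = d_X(x,x')$, and note there is a morphism $f \in \mor_{\mgen}(\Delta_2(\delta), X)$ with image $\{x, x'\}$; functoriality of $\cfunc$ maps the one-block partition of $\Delta_2(\delta)$ into $\cfunc(X)$, forcing $x$ and $x'$ into the same block. As $x, x'$ were arbitrary and $X$ is finite (hence its ``connectivity graph'' on all pairs is complete), $\cfunc(X)$ has a single block.

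The main obstacle, and the step I would spend the most care on, is the first case: producing a genuine $\mgen$-morphism from (a rescaling of) $X$ onto a two-point space that separates a prescribed pair of points while being distance non-increasing. The subtlety is that collapsing $X$ onto $\{p,q\}$ is distance non-increasing only when the distance between the two preimage sets is at least the gap $\delta$ between $p$ and $q$; the construction in part~(2) of the proof of Theorem~\ref{theo:unique} handles exactly this by choosing $\delta$ as a minimum of cross-distances, and I would adapt that device here (taking the two ``clusters'' to be $\{x'\}$ and $X\setminus\{x'\}$, or more robustly the relevant $\sim_{\delta_\cfunc}$-classes), then combine it with scale invariance to dispose of the value of $\delta$. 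Everything else is a direct application of functoriality and the dichotomy on two-point spaces.
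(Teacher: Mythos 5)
Your proposal is correct and follows essentially the same route as the paper: the dichotomy on $\Delta_2(\delta)$ forced by scale invariance, a morphism $\Delta_2(\delta)\to X$ hitting $x,x'$ for the one-block case, and a two-point collapse $X\to\Delta_2(\delta)$ for the singletons case. The difficulty you flag in the second case is resolved in the paper simply by taking $\delta=\sep(X)$, so that \emph{any} bipartition of $X$ gives a distance non-increasing collapse onto $\Delta_2(\delta)$; your minimum-of-cross-distances device also works, and the rescaling detour is unnecessary.
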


\begin{proof}
Let $\cfunc:\mgen\rightarrow\catc$ be a scale invariant clustering functor. Consider first the metric space $\Delta_2(1)$ consisting of two points at distance $1$. There are two possiblities: either \textbf{(1)}  $\cfunc(\Delta_2(1))$ is in one piece, or \textbf{(2)}  $\cfunc(\Delta_2(1))$ in two pieces.  

Assume \textbf{(1)}.  By scale invariance, the partition of $\Delta_2(\delta)$ given by  $\cfunc(\Delta_2(\delta))$ is in one piece for all $\delta>0$. Fix any finite metric space $(X,d_X)$ and write $\cfunc(X,d_X) = (X,P)$. Pick any $x,x'\in X$, let $\delta = d_X(x,x')$, and  consider the morphism $\phi\in\mor_{\mgen}(\Delta_2(\delta),X)$ s.t. $\phi(p)=x$ and $\phi(q)=x'$, where $p$ and $q$ are the two elements of $\Delta_2(\delta).$ By scale invariance and  functoriality, $x$ and $x'$ must then be in the same block of $P$. Since $x$ and $x'$ were arbitrary, $P$ must be in one piece as well. It follows from the arbitrariness of $X$ that $\cfunc$ then assigns the single block partition to any finite metric space.

Now assume \textbf{(2)}. By scale invariance $\cfunc(\Delta_2(\delta))$ is in two pieces for all $\delta>0$.   Pick any finite metric space $(X,d_X)$ and write $\cfunc(X,d_X) = (X,P)$. Pick any $x,x'\in X$, and a partition of $X$ into subsets $A$
 and $B$ such that $x\in A$ and $x'\in B$. Let $\delta =\sep(X)$, and  consider the map $\psi:X\rightarrow \Delta_2(\delta)$ given by $\psi(a)=p$ for all $a\in A$ and $\psi(b)=q$ for all $b\in B$. It is easy to verify that  $\psi\in\mor_{\mgen}(X,\Delta_2(\delta))$. By scale invariance and  functoriality, $x$ and $x'$ cannot be in the same block of $P$. Since $x$ and $x'$ were arbitrary, no two points belong to the same block of $P$, and hence $\cfunc$ assigns $X$ the partition into singletons. It follows from the arbitrariness of $X$ that $\cfunc$ assigns to every finite metric space its partition into singletons. This finishes the proof.
\end{proof}

By refining the proof of the previous theorem, we find that the behavior of any $\mmon$-functorial clustering functor is also severely restricted in each $\mathcal{M}_k$, $k\in\N$.\footnote{Recall that $\mathcal{M}_k$ denotes the collection of all $k$-point metric spaces.}  
\begin{theorem}\label{theorem:scale-invariance-mmon}
Let $\cfunc:\mmon\rightarrow\catc$ be a clustering functor s.t. $\cfunc\compcirc \sigma_\lambda=\cfunc$ for all $\lambda>0$. Let $$K(\cfunc):=\big\{k\geq 2;\,\cfunc(\Delta_k(1))\,\mbox{is in one piece}\big\}.$$

\begin{itemize}
\item If $K(\cfunc)=\emptyset$, then $\cfunc$ assigns to each finite metric space $X$ its partition into singletons.
\item Otherwise, let $k_\cfunc=\min K(\cfunc)$. Then,
\begin{itemize}

\item For each $k\geq k_\cfunc$, $\cfunc$ assigns to each finite metric space $X\in \mathcal{M}_k$ the partition of $X$ with only one block.

\item For each $2\leq k< k_\cfunc$, $\cfunc$ assigns to each finite metric space $X\in \mathcal{M}_k$ the partition of $X$ into $k$ singletons.

\end{itemize}
\end{itemize}
\end{theorem}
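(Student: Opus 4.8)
The plan is to reduce the whole statement to the behaviour of $\cfunc$ on the equilateral spaces $\Delta_k(\delta)$, using that in $\mmon$ one always has the morphisms one needs between a $k$-point metric space and a ``nearby'' equilateral $k$-point space, in both directions, together with the isometric embeddings of smaller equilateral spaces into larger ones.

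First I would record a rigidity fact. The isometry group of $\Delta_k(\delta)$ is the full symmetric group $S_k$, and $\cfunc$ restricted to $\miso$ is functorial, so (applying functoriality to the permutations of $\Delta_k(\delta)$, which are isometries) the partition $\cfunc(\Delta_k(\delta))$ must be $S_k$-invariant. But the only $S_k$-invariant partitions of a $k$-element set are the one-block partition and the partition into $k$ singletons: a block $B$ of size $j$ with $1<j<k$ can be moved by some $\sigma\in S_k$ to a set $\sigma(B)\ne B$ with $\sigma(B)\cap B\ne\emptyset$, which would have to be another block of the partition, contradicting disjointness of blocks. By scale invariance, $\cfunc(\Delta_k(\delta))=\cfunc(\sigma_\delta(\Delta_k(1)))=\cfunc(\Delta_k(1))$ for every $\delta>0$; hence, for $k\ge 2$, ``$k\notin K(\cfunc)$'' is exactly the statement that $\cfunc(\Delta_k(\delta))$ is the all-singletons partition for every $\delta>0$.

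The second ingredient is a transport lemma: if $\phi\colon A\to B$ is a \emph{bijective} morphism in $\mmon$, then by the definition of morphisms in $\catc$ the partition $\cfunc(A)$ refines $\phi^{*}(\cfunc(B))$, which is a partition of $A$ with exactly the same block sizes as $\cfunc(B)$; consequently, if $\cfunc(B)$ is all-singletons then so is $\cfunc(A)$, and if $\cfunc(A)$ is the one-block partition then so is $\cfunc(B)$. I would then observe the morphisms I need. For any $X\in\mathcal M_k$ with $k\ge 2$: any bijection $\Delta_k(\diam X)\to X$ is distance non-increasing (all distances in $\Delta_k(\diam X)$ equal $\diam X$), hence a morphism in $\mmon$; and any bijection $X\to\Delta_k(\sep X)$ is distance non-increasing (all distances in $\Delta_k(\sep X)$ equal $\sep X$), hence a morphism in $\mmon$. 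Finally, for $k\ge k_\cfunc$, including $k_\cfunc$ of the points of $\Delta_k(1)$ gives an isometric, in particular injective distance non-increasing, embedding $\Delta_{k_\cfunc}(1)\hookrightarrow\Delta_k(1)$.

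Now the proof assembles. If $K(\cfunc)\neq\emptyset$, applying functoriality to $\Delta_{k_\cfunc}(1)\hookrightarrow\Delta_k(1)$ (with $k\ge k_\cfunc$) forces the $k_\cfunc$ image points into a single block of $\cfunc(\Delta_k(1))$; that block has at least $k_\cfunc\ge 2$ points, so by the rigidity fact $\cfunc(\Delta_k(1))$ is the one-block partition, i.e. $k\in K(\cfunc)$ for all $k\ge k_\cfunc$. For the upper regime, given $X\in\mathcal M_k$ with $k\ge k_\cfunc$, apply the transport lemma to $\Delta_k(\diam X)\to X$: since $\cfunc(\Delta_k(\diam X))=\cfunc(\Delta_k(1))$ is one block, so is $\cfunc(X)$. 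For the lower regime, given $X\in\mathcal M_k$ with $2\le k<k_\cfunc$ (so $k\notin K(\cfunc)$, whence $\cfunc(\Delta_k(\sep X))$ is all-singletons), apply the transport lemma to $X\to\Delta_k(\sep X)$ to get that $\cfunc(X)$ is all-singletons; the case $K(\cfunc)=\emptyset$ is this same argument run for every $k\ge 2$, the case $k=1$ being vacuous since a one-point space has a unique partition. I do not expect a genuinely hard step; the one conceptual point — and the reason the argument is organized differently from the $\mgen$ case in Theorem~\ref{theorem:scale-invariance} — is that $\mmon$ morphisms are injective and so cannot collapse a putative cluster to a point, so the ``project onto $\Delta_2$'' device is unavailable. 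The substitute is to sandwich an arbitrary $k$-point space between the two equilateral $k$-point spaces at its separation and diameter scales and to exploit the rigidity of $\cfunc$ on equilateral spaces; the only mild subtlety is that one must first prove $K(\cfunc)$ is upward closed before the diameter-scale comparison can be invoked.
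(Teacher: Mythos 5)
Your proof is correct, and its skeleton is the paper's: sandwich an arbitrary $X\in\mathcal M_k$ between $\Delta_k(\diam X)$ and $\Delta_k(\sep X)$ via bijective $\mmon$-morphisms and transport the conclusion from the equilateral spaces. Where you differ is in making explicit two facts that the paper's very terse proof uses silently. First, the paper's treatment of the regime $k\geq k_\cfunc$ simply asserts that a morphism $\Delta_k(\diam X)\to X$ forces one block, which presupposes that $K(\cfunc)$ is upward closed; you actually prove this, via the isometric embedding $\Delta_{k_\cfunc}(1)\hookrightarrow\Delta_k(1)$ combined with your $S_k$-rigidity observation that $\cfunc(\Delta_k(\delta))$ must be either the one-block partition or all singletons. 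Second, in the singleton regime the paper sidesteps the rigidity fact by choosing the bijection $X\to\Delta_k(\sep X)$ so that a given pair $x,x'$ lands in two distinct blocks (only ``not one piece'' is needed there), whereas you first upgrade ``not one piece'' to ``all singletons'' and then apply your transport lemma; the two are equivalent, but your version makes the transport step uniform in both directions. Net effect: same route, but your writeup closes a genuine gap in the published argument (the upward closure of $K(\cfunc)$) rather than leaving it implicit.
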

\begin{proof}

Assume first that $K(\cfunc)=\emptyset$, pick any finite metric space $X$ and  let $k=|X|$. Pick any two distinct points $x,x'\in X$. Choose any injective map $\phi:X\rightarrow \Delta_k(\sep(X))$ such that $\phi(x)$ and $\phi(x')$ fall in different blocks of the partition of $\Delta_k(\sep(X))$ produced by $\cfunc$. Then, functoriality guarantees that $x$ and $x'$ must belong to different blocks of the partition of $X$ generated by $\cfunc$. Since $x,x'\in X$ were arbitrary we conclude that $\cfunc$ partitions $X$ into singletons.

Now assume that $K(\cfunc)\neq\emptyset$. Pick $k\geq k_\cfunc$ and $X\in\mathcal{M}_k$.  Consider any morphism $\phi\in\mor_{\mmon}(\Delta_k(\diam(X)),X)$ arising from arbitrarily choosing an injection from the underlying set of $\Delta_k(1)$ to $X$. Then, by functoriality, this forces $\cfunc(X,d_X)$ to produce a partition of $X$ into exactly one block.

Finally, for $2\leq k<k_\cfunc$, $\cfunc$ partitions $\Delta_k(1)$ into more than piece. An argument similar to the one given for the case $K(\cfunc)=\emptyset$ proves that then $\cfunc$ partitions any $X\in\mathcal{M}_k$ into ($k$) singletons. 

\end{proof}

\subsection{Clustering functors on \texorpdfstring{$\mmon$ and density}{Mmon}}\label{section:mmon-density}
In the case of $\mmon$, there is a richer class of
allowed clustering schemes that includes functors other than the Vietoris-Rips one. We already saw in Example \ref{example:non-excisive} that in $\mmon$ there are non-excisive clustering schemes that by construction are different from the Vietoris-Rips functor. A class of excisive $\mmon$-functorial clustering schemes which has practical interest is the following.
\begin{figure}[htb] \centering
	\includegraphics[width=1\linewidth]{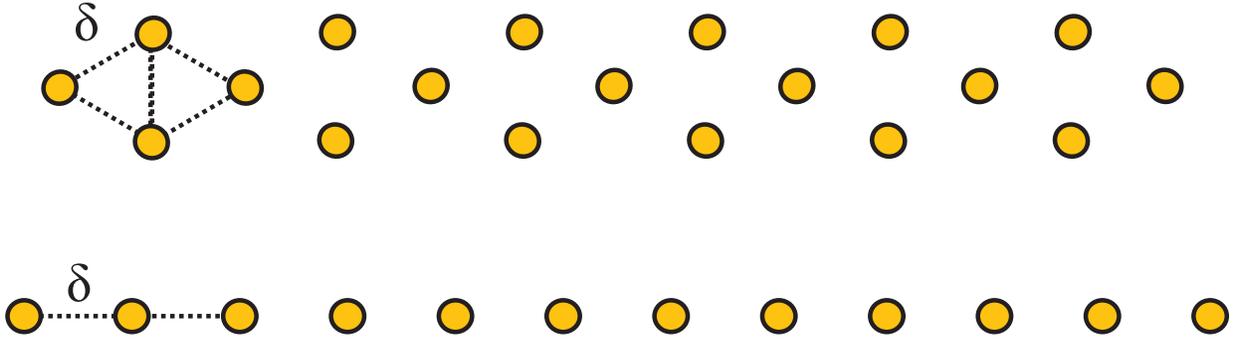}
	\caption{Fix $\delta>0$ and consider the two clustering functors $\mathfrak{C}^{\Delta_2(\delta)}$ and $\mathfrak{C}^{\Delta_3(\delta)}$ on $\mmon$. Then $\mathfrak{C}^{\Delta_2(\delta)}$ applied to both metric spaces in the Figure will yield a single cluster. Meanwhile, $\mathfrak{C}^{\Delta_3(\delta)}$ will yield a single cluster for the metric space on right, but it will not connect the points of the metric space on the left. This admits the interpretation that in order for $\cfunc^{\Delta_3(\delta)}$ to produce a cluster, a certain degree of agglomeration of the points is necessary. This reflects the fact that this clustering scheme is more sensitive to density than its two-point counterpart $\cfunc^{\Delta_2(\delta)}=\rfunc_\delta$. Similar considerations apply to the general case of $\cfunc^{\Delta_k(\delta)}$, any $k\in\N$}  \label{fig:ex-rep} \end{figure}
Consider for each $n\in\mathbb{N}$ and
  $\delta\in \mathbb{R}_+$ the clustering functor
  $\mathfrak{C}^{\Delta_{n}(\delta)}$, and  notice that this clustering method implements the notion that
  in order to deem a two points $x$ and $x'$ in a finite metric space $X$ to be in the same cluster one requires
  that they are ``tightly packed together'' in the sense that there exists a sequence of points $x_0,x_1,\ldots,x_k$ connecting $x$ to $x'$ such that each consecutive pair $\{x_i,x_{i+1}\}$ is contained in a subset $S_i$ of $X$ of cardinality $n$ whose inter-point distances do not exceed $\delta$. This notion is closely related to proposals in network clustering \cite{hungarian-clique} and the DBSCAN algorithm \cite{dbscan}, and for $n>2$ provides a way to tackle the so called ``chaining effect'' exhibited by single linkage clustering (i.e. $\cfunc^{\Delta_2(\delta)}$), see Figure \ref{fig:ex-rep}. Note that Theorem \ref{theo:comp-rips} implies that for a given finite metric space $(X,d_X)$, $\cfunc^{\Delta_3(\delta)}(X,d_X)$ can be obtained by first applying a change of metric to $X$ and then applying the standard Vietoris-Rips functor to the resulting metric space. The change of metric specified by the theorem leads to finding all $3$-cliques in $X$ whose diameter is not greater than $\delta$. 

\begin{corollary}\label{coro:change-metric}
On $\mmon$, for all $m\in \N$
$$\cfunc^{\Delta_m(\delta)} = \rfunc_\delta\compcirc \tfunc^m.$$
\end{corollary}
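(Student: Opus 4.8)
The plan is to derive the identity from the factorization Theorem~\ref{theo:comp-rips} together with one elementary scaling computation. First, note that $\cfunc^{\Delta_m(\delta)}$ is finitely representable: it is, by construction, represented by the one-element collection $\Omega=\{\Delta_m(\delta)\}$. Hence Theorem~\ref{theo:comp-rips} applies on $\mmon$ and gives
\[
\cfunc^{\Delta_m(\delta)}=\rfunc_1\compcirc\tfunc^{\{\Delta_m(\delta)\}},
\]
where $\tfunc^{\{\Delta_m(\delta)\}}$ is the metric-changing endofunctor of $\mmon$ defined in (\ref{eq:Tfunc})--(\ref{eq:W}). Reading $\tfunc^m$ in the statement as the endofunctor $\tfunc^{\{\Delta_m(1)\}}$ built from unit $m$-cliques --- explicitly, $\tfunc^m(X,d_X)=\bigl(X,\mathcal{U}(W_X)\bigr)$ with $W_X(x,x')=\min\{\diam(S):S\subseteq X,\ |S|=m,\ x,x'\in S\}$ and $W_X\equiv+\infty$ off-diagonal when $|X|<m$ --- what remains to prove is $\rfunc_1\compcirc\tfunc^{\{\Delta_m(\delta)\}}=\rfunc_\delta\compcirc\tfunc^{\{\Delta_m(1)\}}$.

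The key observation is a scaling identity for the function $W_X^\Omega$ of (\ref{eq:W}). Since $\lambda\cdot\Delta_m(\delta)=\Delta_m(\lambda\delta)$, the change of variable $\mu=\lambda\delta$ in the infimum defining $W_X^{\{\Delta_m(\delta)\}}(x,x')$ shows that $W_X^{\{\Delta_m(\delta)\}}=\tfrac{1}{\delta}\,W_X^{\{\Delta_m(1)\}}$ pointwise (and one checks $W_X^{\{\Delta_m(1)\}}$ agrees with the $W_X$ written above). Because the maximal sub-dominant ultrametric operator $\mathcal{U}$ of (\ref{eq:U}) is positively homogeneous of degree $1$ --- multiplying $W$ by $c>0$ multiplies each chain cost $\max_i W(x_i,x_{i+1})$ by $c$, hence multiplies the minimum over chains by $c$ --- this yields $d_X^{\{\Delta_m(\delta)\}}=\tfrac{1}{\delta}\,d_X^{\{\Delta_m(1)\}}$. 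In functorial language, using the scaling endofunctor $\sigma_{1/\delta}$ of Example~\ref{scaling} and the fact that $\tfunc^{\{\Delta_m(\delta)\}}$, $\tfunc^{\{\Delta_m(1)\}}$ and $\sigma_{1/\delta}$ all act as the underlying set map on morphisms, this says precisely $\tfunc^{\{\Delta_m(\delta)\}}=\sigma_{1/\delta}\compcirc\tfunc^{\{\Delta_m(1)\}}$.

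It remains to note the evident identity $\rfunc_1\compcirc\sigma_{1/\delta}=\rfunc_\delta$ as functors $\mmon\rightarrow\catc$: on an object $(X,d_X)$, the partition $\rfunc_1\bigl(X,\tfrac{1}{\delta}d_X\bigr)$ is the one induced by $\sim_1$ computed with $\tfrac{1}{\delta}d_X$, and $\tfrac{1}{\delta}d_X(x_i,x_{i+1})\le 1$ iff $d_X(x_i,x_{i+1})\le\delta$, so this is exactly $\sim_\delta$ for $d_X$, i.e. $\rfunc_\delta(X,d_X)$; on morphisms both composites send $f$ to $f$. Chaining the three displays,
\[
\cfunc^{\Delta_m(\delta)}=\rfunc_1\compcirc\tfunc^{\{\Delta_m(\delta)\}}=\rfunc_1\compcirc\sigma_{1/\delta}\compcirc\tfunc^{\{\Delta_m(1)\}}=\rfunc_\delta\compcirc\tfunc^m,
\]
as claimed.

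The only place requiring care is this bookkeeping of scale factors: the parameter $\delta$ occurring inside the representing clique $\Delta_m(\delta)$ has to be moved from the metric-change step to the threshold of the Vietoris--Rips step, and Theorem~\ref{theo:comp-rips} as stated normalizes that threshold to $1$. Everything else is a direct invocation of Theorem~\ref{theo:comp-rips} and of the homogeneity of $\mathcal{U}$. (One could instead bypass Theorem~\ref{theo:comp-rips} entirely and show the two output partitions mutually refine each other by unwinding the ``chain of $m$-cliques'' description of $\cfunc^{\Delta_m(\delta)}$ against the ``chain of chains'' description of $\rfunc_\delta\compcirc\tfunc^m$ coming from (\ref{eq:U}); but that is just the proof of Theorem~\ref{theo:comp-rips} re-run for $\Omega=\{\Delta_m(\delta)\}$, with no real gain.)
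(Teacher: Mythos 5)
Your proof is correct and follows the route the paper intends: the corollary is a direct application of Theorem \ref{theo:comp-rips} to the singleton representing family $\{\Delta_m(\delta)\}$, and your scaling bookkeeping --- $W_X^{\{\Delta_m(\delta)\}}=\tfrac{1}{\delta}\,W_X^{\{\Delta_m(1)\}}$, degree-one homogeneity of $\mathcal{U}$, and $\rfunc_1\compcirc\sigma_{1/\delta}=\rfunc_\delta$ --- is exactly the computation needed to move the parameter $\delta$ from the metric-change step to the Vietoris--Rips threshold, which the paper leaves implicit. One remark worth making explicit: your reading of $\tfunc^m$ as $\tfunc^{\{\Delta_m(1)\}}$ is not the paper's literal shorthand, which sets $\tfunc^m=\tfunc^{\Omega}$ for the full scale family $\Omega=\{\Delta_m(\delta),\ \delta\geq 0\}$. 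With that $\Omega$ the function $W_X^{\Omega}$ of (\ref{eq:W}) would vanish off the diagonal whenever $|X|\geq m$ (for any $\lambda>0$ one can choose $\delta$ so large that $\mor_{\mmon}(\Delta_m(\lambda\delta),X)\neq\emptyset$ with $x,x'$ in the image), so the statement taken literally would fail. Your normalization is the one that makes the corollary true, and your identification $W_X^{\{\Delta_m(1)\}}(x,x')=\min\{\diam(S):\ S\subseteq X,\ |S|=m,\ x,x'\in S\}$ matches the intended interpretation given in the surrounding text (finding all $m$-cliques of diameter at most $\delta$). So this is the same approach as the paper, carried out with the care the paper omits.
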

Above, we have used the shorthand notation $\tfunc^m = \tfunc^\Omega$, for $\Omega=\{\Delta_m(\delta),\,\delta\geq 0\}$.

The formalism of representable clustering functors allows for far more generality than that provided by families $\{\Delta_n(\delta),\,n\in\N,\,\delta>0\}$. For example, one could encode the requirement that clusters be densely connected by considering a clustering functor represented by the metric space in Figure \ref{fig:subdiv}t, which consists of  the vertices of an equilateral triangle of side $\delta$, together with its center.
\begin{figure}
\centering
	\includegraphics[width=.5\linewidth]{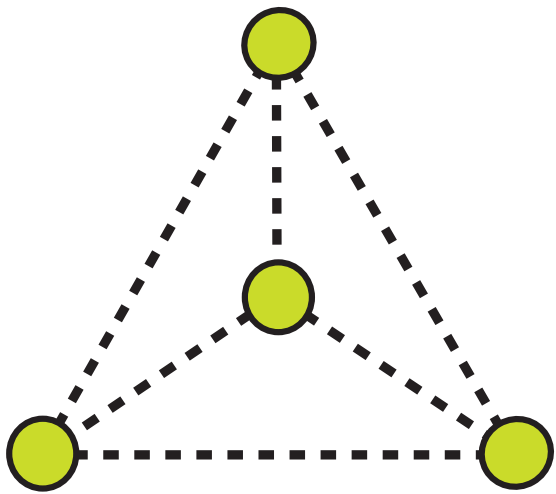}
\caption{A metric space that encodes a notion of ``density'' different from that provided by $\Delta_3(\delta)$. \label{fig:subdiv}. }
\end{figure}

\section{Hierarchical Clustering}
\label{results-hc}

\begin{example}[\textbf{A hierarchical version of the Vietoris-Rips functor}] \label{ex:key} We define a functor 
$$\rfunc: \underline{{\mathcal M}}^{gen} \rightarrow
\underline{{\mathcal P}}$$
as follows.  For a finite metric space $(X, d_X)$, we define $(X,d_X)$ to be the persistent set $(X, \theta_X ^{{ \mathrm{VR}}})$, where
$\theta_X^{{ \mathrm{VR}}}(r)$ is the partition associated to the equivalence
relation $\sim _r$ defined in Example \ref{rips}.  This is clearly an
object in $\underline{{\mathcal P}}$.  We also define how $\rfunc$ acts on maps $f : (X,d_X) \rightarrow (Y,d_Y)$: The value of
$\rfunc(f)$ is simply the set map $f$ regarded as a
morphism from $(X,\theta_X^{{ \mathrm{VR}}})$ to $(Y,\theta_Y^{{ \mathrm{VR}}})$ in
$\underline{{\mathcal P}}$.  That it is a morphism in $\underline{{\mathcal
P}}$ is easy to check. 

Clearly, this functor implements the hierarchical version of single linkage clustering in the sense that for each $\delta\geq 0$, if one writes $\rfunc_\delta{(X,d_X)}=(X,P_X(\delta))$, then $P_X(\delta) = \theta_X^{{ \mathrm{VR}}}(\delta)$.
\end{example}

\subsection{Functoriality over $\miso$} 
This is the smallest category we will deal with.  The morphisms in
$\underline{{\mathcal M}}^{iso}$ are simply the bijective maps between
datasets which preserve the distance function.  As such, functoriality
of a clustering algorithms over $\underline{{\mathcal M}}^{iso}$ simply means
that the output of the scheme doesn't depend on any artifacts in the
dataset, such as the way the points are named or the way in which they
are ordered.  

    {{ Agglomerative hierarchical clustering}, in standard
        form, as described for example in \cite{clusteringref}, begins
        with point cloud data and constructs a \emph{binary} tree (or
        dendrogram) which describes the merging of clusters as a
        threshold is increased.  The lack of functoriality comes from
        the fact that when a single threshold value corresponds to
        more than one data point, one is forced to choose an ordering
        in order to decide which points to ``agglomerate'' first.
        This can easily be modified by relaxing the requirement that
        the tree be binary. This is what we did in Example
        \ref{hierarchical} In this case, one can view these methods as
        functorial on $\underline{{\mathcal M}}^{iso}$, where the functor
        takes its values in arbitrary rooted trees.  It is understood
        that in this case, the notion of morphism for the output
        ($\underline{\mathcal P}$) is simply isomorphism of rooted
        trees. In contrast, we see next that amongst these methods,
        when we impose that they be functorial over the larger (more
        demanding) category $\underline{\mathcal M}^{gen}$ then only
        one of them passes the test.\footnote{The result in Theorem
          \ref{thm:uniq} is actually more powerful in that it states
          that there is a unique functor from $\underline{\mathcal
            M}^{gen}$ to $\underline{\mathcal P}$ that satisfies
          certain natural conditions.} }

\subsection{Complete and Average linkage are not functorial over $\mmon$}

It is interesting to point out
why complete linkage and average linkage (agglomerative) clustering,
as defined in Example \ref{hierarchical} fail to  be functorial on
$\underline{\mathcal M}^{inj}$ (and therefore also on $\mgen$ as well) . A simple example explains this: consider
the metric spaces $X=\{A,B,C\}$ with metric given by the edge lengths
$\{4,3,5\}$ and $Y=(A',B',C')$ with metric given by the edge lengths
$\{4,3,2\}$, as given in Figure
\ref{cexample}. Obviously the map $f$ from $X$ to $Y$ with $f(A)=A'$, 
$f(B)=B'$ and $f(C)=C'$ is a morphism in $\underline{\mathcal
M}^{inj}$. Note that for example for $r=3.5$ (shaded regions of the
dendrograms in Figure \ref{cexample}) we have that the partition of
$X$ is $\Pi_X = \{\{A,C\},B\}$ whereas the partition of $Y$ is $\Pi_Y
= \{\{A',B'\},C'\}$ and thus $f^*(\Pi_Y)=\{\{A,B\},\{C\}\}$. Therefore
$\Pi_X$ does not refine $f^*(\Pi_Y)$ as required by functoriality. The same
construction yields a counter-example for average linkage.
\begin{figure}[htb]
		\centering
		\includegraphics[width=.7\linewidth]{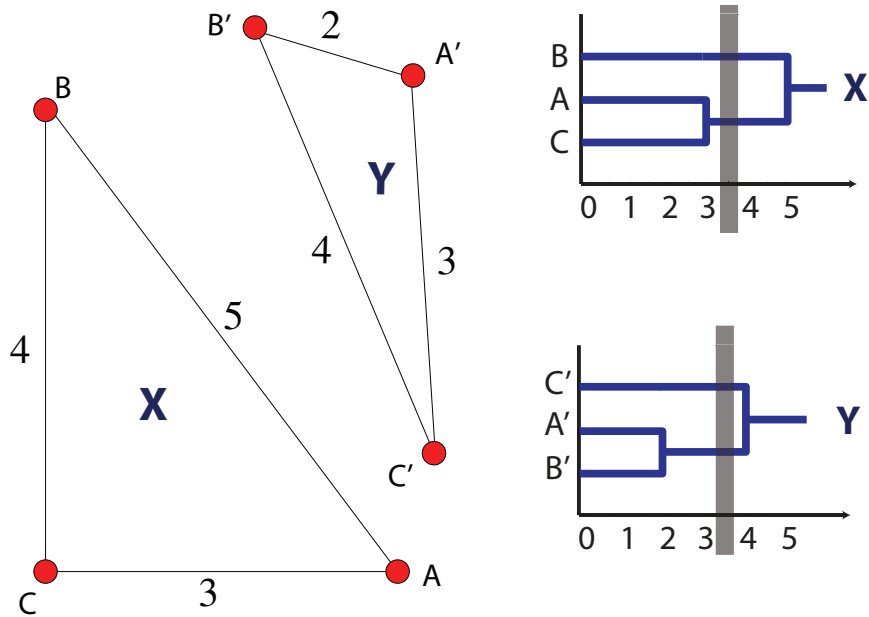}
		\caption{An example that shows why complete linkage
		fails to be functorial on $\underline{\mathcal M}^{inj}$ (and therefore also on $\mgen$).}
		\label{cexample}
\end{figure}

\subsection{Functoriality over $\mgen$: a uniqueness theorem}

We prove a theorem of the same flavor as the
main theorem of \cite{kleinberg}, except that we prove existence and
uniqueness on $\underline{\mathcal M}^{gen}$ instead of impossibility
in our context.

\begin{theorem} \label{thm:uniq} Let $\hfunc: \underline{{\mathcal M}}^{gen} \rightarrow \underline{{\mathcal P}} $
be a hierarchical clustering functor which satisfies the following conditions.
\begin{description}

\item[(I)] {Let $\alpha : \underline{{\mathcal M}}^{gen} \rightarrow
    \underline{Sets}$ and $\beta :\underline{ {\mathcal P}}\rightarrow
    \underline{Sets}$ be the forgetful functors $(X,d_X) \rightarrow
    X$ and $(X,\theta_X) \rightarrow X$, which forget the metric and
    persistent set respectively, and only ``remember'' the underlying sets
    $X$.  Then we assume that $\beta \compcirc \hfunc = \alpha$.  This
    means that the underlying set of the persistent set associated to
    a metric space is just the underlying set of the metric space. }

	\item[(II)] {For $\delta\geq 0$ let
	$\Delta_2(\delta)=(\{p,q\},\dmtwo{\delta})$ denote the two point
	metric space with underlying set $\{ p,q \}$, and where
	$\mbox{dist}(p,q) = \delta$.  Then $\hfunc (\Delta_2(\delta))$ is the
	persistent set $(\{p,q\},\theta_{\Delta_2(\delta)})$ whose
	underlying set is $\{p,q \}$ and where $\theta_{\Delta_2(\delta)}
	(t) $ is the partition with one element blocks when $t <
	\delta$ and is the partition with a single two point block
	when $t \geq \delta$.  }

	\item[(III)] { Write
	$\hfunc(X,d_X)=(X,\theta^{\hfunc})$, then for any $t <
	\mbox{sep}(X)$, the partition $\theta^{\hfunc}(t)$ is the
	discrete partition with one element blocks.  }
\end{description}
Then $\hfunc $ is equal to the functor $\rfunc$.  
\end{theorem}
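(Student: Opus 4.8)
The plan is to fix an arbitrary finite metric space $(X,d_X)$, write $\hfunc(X,d_X)=(X,\theta^{\hfunc})$ (which is legitimate by \textbf{(I)}; note also that \textbf{(I)} forces the set map underlying $\hfunc(f)$ to equal $\beta(\hfunc(f))=\alpha(f)=f$ for every morphism $f$ of $\mgen$, so $\hfunc$ will automatically agree with $\rfunc$ on morphisms once it does so on objects), and then to prove that for every $r\geq 0$ the partition $\theta^{\hfunc}(r)$ coincides with the Vietoris--Rips partition $\theta_X^{\mathrm{VR}}(r)$ into $\sim_r$-classes. Since a persistent set is determined by its values at each $r$, and $X$ was arbitrary, this suffices. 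I would establish the equality of these two partitions by showing each refines the other.

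For the inclusion ``$\theta_X^{\mathrm{VR}}(r)$ refines $\theta^{\hfunc}(r)$'' (the easy half), I would take $x\sim_r x'$, choose a chain $x=x_0,\dots,x_k=x'$ with $d_X(x_i,x_{i+1})\leq r$, and for each $i$ regard the assignment $p\mapsto x_i$, $q\mapsto x_{i+1}$ as a map $f_i:\Delta_2(r)\to X$, which is distance non-increasing and hence a morphism of $\mgen$. By \textbf{(II)} the two points $p,q$ lie in a single block of $\theta_{\Delta_2(r)}(r)$; since $\hfunc(f_i)=f_i$ is persistence preserving, $\theta_{\Delta_2(r)}(r)$ refines $f_i^*(\theta^{\hfunc}(r))$, which forces $x_i$ and $x_{i+1}$ into a common block of $\theta^{\hfunc}(r)$. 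Chaining over $i$ puts $x$ and $x'$ in the same block. (The degenerate case $r=0$ is immediate, both partitions being discrete.)

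For the reverse inclusion ``$\theta^{\hfunc}(r)$ refines $\theta_X^{\mathrm{VR}}(r)$'' I would argue contrapositively: given $x\not\sim_r x'$, set $C:=[x']_r$, so $x\notin C$, and use Lemma \ref{prop:props-uaster} to see that $\delta:=\min\{d_X(a,a')\mid a\in X\setminus C,\ a'\in C\}$ is strictly greater than $r$. Then the ``collapsing'' map $\phi:X\to\Delta_2(\delta)$ sending $C$ to $q$ and $X\setminus C$ to $p$ is distance non-increasing, hence a morphism of $\mgen$, and by \textbf{(II)}, since $r<\delta$, the partition $\theta_{\Delta_2(\delta)}(r)$ is discrete, so $\phi^*(\theta_{\Delta_2(\delta)}(r))=\{X\setminus C,\ C\}$. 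Because $\hfunc(\phi)=\phi$ is persistence preserving, $\theta^{\hfunc}(r)$ refines $\{X\setminus C,\ C\}$, and since $x\in X\setminus C$ while $x'\in C$, they lie in different blocks of $\theta^{\hfunc}(r)$, as required. This collapsing-map device is exactly the one used in the proof of Theorem \ref{theo:unique}; note moreover that hypothesis \textbf{(III)} is precisely what this argument delivers on the range $r<\sep(X)$, so one could equivalently invoke \textbf{(III)} there and run the construction only for larger $r$.

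I expect the only real subtlety to be the careful bookkeeping around \textbf{(I)}: it is what licenses replacing the abstract $\catp$-morphism $\hfunc(f)$ by the literal set map $f$, and without this identification the persistence-preservation refinements could not be chained. The two constructions of morphisms --- chains of copies of $\Delta_2(r)$ mapping \emph{into} $X$, and the collapsing map \emph{out of} $X$ onto $\Delta_2(\delta)$ --- are the creative steps, but both are light; verifying that these maps are distance non-increasing and that the relevant composites remain morphisms of $\mgen$ is routine.
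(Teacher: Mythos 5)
Your proof is correct. The first half is exactly the paper's argument: reduce to the claim that $d_X(x,x')\le r$ forces $x$ and $x'$ into a common block of $\theta^{\hfunc}(r)$ via a morphism $\Delta_2(r)\to X$ and condition \textbf{(II)}, then chain. For the harder direction the paper takes a genuinely different route: it forms the quotient metric space $(X_r,d_r)$ whose points are the $\sim_r$-classes, metrized by the maximal sub-dominant ultrametric $\mathcal{U}(W_X)$, checks that $\sep(X_r,d_r)>r$, applies hypothesis \textbf{(III)} to $(X_r,d_r)$ to see that $\theta_{X_r}^{\hfunc}(r)$ is discrete, and then pushes forward along the projection $\pi_r:X\to X_r$. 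You instead collapse $X$ onto the two-point space $\Delta_2(\delta)$ that separates $[x']_r$ from its complement and invoke \textbf{(II)}, transplanting the device from the proof of Theorem \ref{theo:unique}; the verification that $\delta>r$ via Lemma \ref{prop:props-uaster} and that the collapsing map is distance non-increasing is exactly as you say. Both arguments are valid. Yours is somewhat more economical, avoiding the ultrametric quotient entirely, and it has the interesting by-product you point out: in the $\mgen$ setting hypothesis \textbf{(III)} is actually a consequence of \textbf{(I)} and \textbf{(II)}, so the theorem's hypothesis list is redundant from this viewpoint (the paper keeps \textbf{(III)} as an explicit axiom, in parallel with the discussion of Kleinberg's conditions). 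Your bookkeeping around \textbf{(I)} — that $\beta\compcirc\hfunc=\alpha$ on morphisms identifies the underlying set map of $\hfunc(f)$ with $f$, which is what lets persistence preservation be read off as a refinement statement about $f^*$ — is also handled correctly and matches the paper's implicit use of it.
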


We should point out that another characterization of single linkage
has been obtained in the book \cite{jardine-sibson}.

\begin{proof}[Proof of Theorem \ref{thm:uniq}]
Write $\hfunc(X,d_X) = (X,\theta_X^{\hfunc})$. For each $r\geq 0$ we will
prove that \textbf{(a)} $\theta_X^{{ \mathrm{VR}}}(r)$ is a refinement of $\theta_X^{\hfunc}(r)$
and \textbf{(b)} $\theta_X^{\hfunc}(r)$ is a refinement of $\theta_X^{{ \mathrm{VR}}}(r)$. Then it will follow that $\theta_X^{{ \mathrm{VR}}}(r)=\theta_X^{\hfunc}(r)$ for all
$r\geq 0$, which shows that the objects are the same.  Since this is a situation where, given any pair of objects, there is at most one  morphism    between them,  this also determines the effect of the functor on morphisms.  


Fix $r\geq 0$. In order to obtain (a) we need to prove that whenever
$x,x'\in X$ lie in the same block of the partition $\theta_X^{{ \mathrm{VR}}}(r)$,
that is $x\sim_r x'$, then they both lie in the same block of
$\theta_X^{\hfunc}(r)$.

skip
It is enough to prove the following
\begin{claim}  Whenever
 $d_X(x,x')\leq r$ then $x$ and $x'$ lie in the same block of $\theta_X^{\hfunc}(r)$.
\end{claim}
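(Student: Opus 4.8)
The plan is to realize the pair $\{x,x'\}$ as the image of a morphism out of a two point metric space, and then to read off the conclusion from conditions (I) and (II) together with the functoriality of $\hfunc$; condition (III) will not be needed for this Claim (it enters only in the reverse inclusion (b)).

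First I would dispose of the trivial case $x=x'$, where there is nothing to prove, and assume $x\neq x'$. Set $\delta:=d_X(x,x')$, so $0<\delta\leq r$, and write $\Delta_2(\delta)=(\{p,q\},\dmtwo{\delta})$. Define $\iota:\Delta_2(\delta)\rightarrow X$ by $\iota(p)=x$ and $\iota(q)=x'$. Since the distance between $p$ and $q$ in $\Delta_2(\delta)$ is $\delta$ and $d_X(\iota(p),\iota(q))=d_X(x,x')=\delta$, the map $\iota$ is distance non-increasing, so $\iota\in\mor_{\mgen}(\Delta_2(\delta),X)$.

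Next I would apply $\hfunc$ to $\iota$. By condition (I), i.e. $\beta\compcirc\hfunc=\alpha$, the underlying set of $\hfunc(X,d_X)$ is $X$ and that of $\hfunc(\Delta_2(\delta))$ is $\{p,q\}$, and $\hfunc(\iota)$, as a map of sets, is $\iota$ itself; by the definition of $\catp$ this says precisely that $\iota$ is a persistence preserving map $(\{p,q\},\theta_{\Delta_2(\delta)})\rightarrow (X,\theta_X^{\hfunc})$. Hence for our fixed $r$, the partition $\theta_{\Delta_2(\delta)}(r)$ refines $\iota^*\big(\theta_X^{\hfunc}(r)\big)$. By condition (II), since $r\geq\delta$, we have $\theta_{\Delta_2(\delta)}(r)=\{\{p,q\}\}$, so the single block $\{p,q\}$ is contained in one block of $\iota^*\big(\theta_X^{\hfunc}(r)\big)$; that block is $\iota^{-1}(B)$ for some block $B$ of $\theta_X^{\hfunc}(r)$, so $\{p,q\}\subseteq\iota^{-1}(B)$. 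Therefore $x=\iota(p)\in B$ and $x'=\iota(q)\in B$, which is exactly the assertion that $x$ and $x'$ lie in the same block of $\theta_X^{\hfunc}(r)$.

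Finally I would observe that the Claim yields part (a): given $x\sim_r x'$, choose a chain $x=x_0,x_1,\ldots,x_k=x'$ with $d_X(x_i,x_{i+1})\leq r$ for each $i$, apply the Claim to each consecutive pair to see that $x_i$ and $x_{i+1}$ share a block of $\theta_X^{\hfunc}(r)$, and use transitivity of "belonging to a common block" to conclude that $x$ and $x'$ do too. I do not expect a genuine obstacle here; the only points needing care are getting the direction of refinement right in the definition of a persistence preserving map, handling the $\iota^{*}$ bookkeeping, and using condition (I) to identify $\hfunc(\iota)$ with the underlying set map $\iota$ — without the latter one could not even speak of $\iota(p)$ and $\iota(q)$ as points of $X$.
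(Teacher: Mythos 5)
Your argument is correct and is essentially the paper's own proof: both realize $\{x,x'\}$ as the image of a morphism from a two-point space and invoke condition (II) together with functoriality, the only (immaterial) difference being that you map out of $\Delta_2(\delta)$ with $\delta=d_X(x,x')$ while the paper maps out of $\Delta_2(r)$, which is equally a morphism in $\mgen$ since $d_X(x,x')\leq r$. Your bookkeeping with $\iota^*$ and the use of condition (I) to identify $\hfunc(\iota)$ with the set map $\iota$ is accurate.
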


Indeed, if the claim is true, and $x\sim_r x'$ then one can find
$x_0,x_1,\ldots,x_n$ with $x_0=x$, $x_n=x'$ and $d_X(x_i,x_{i+1})\leq
r$ for $i=0,1,2,\ldots,n-1$. Then, invoking the claim for all pairs
$(x_i,x_{i+1})$, $i=0,\ldots,n-1$ one would find that: ${x}=x_0$ and
$x_1$ lie in the same block of $\theta_X^{\hfunc}(r)$, $x_1$ and
$x_2$ lie in the same block of $\theta_X^{\hfunc}(r)$, $\ldots$, $x_{n-1}$
and $x_n=x'$ lie in the same block of $\theta_X^{\hfunc}(r)$. Hence, $x$
and $x'$ lie in the same block of $\theta_X^{\hfunc}(r)$.

So, let's prove the claim. Assume $d_X(x,x')\leq r$, then the function
given by $p \rightarrow x, q \rightarrow x^{\prime}$ is a morphism
$g:\Delta_2(r) \rightarrow ( X, d_X) $ in $\underline{{\mathcal M}}^{gen}$.
This means that we obtain a morphism $$\hfunc (g) : \hfunc (\Delta_2(r))
\rightarrow \hfunc (X,d_X)$$ in $\underline{{\mathcal P}}$.  But, by assumption (\textrm{II}),  $p$ and
$q$ lie in the same block of the partition $\theta_{\Delta_2(r)}(r)$. By
functoriality it follows that
$\hfunc(g)$ is persistence preserving 
and hence the elements $g(p) = x$ and $g(q) = x^{{\prime}}$ lie in the
same block of $ \theta_X^{\mathfrak{H}} (r)$. This concludes the proof of (a).

skip
For condition (b), assume that $x$ and $x'$ belong to the same block
of the partition $\theta_X^{\hfunc}(r)$ for some $r\geq 0$. We will prove that necessarily
$x\sim_r x'$. This of course will imply that $x$ and $x'$ belong to the
same block of $\theta_X^{{ \mathrm{VR}}}(r)$.

skip
Consider the metric space $(X_r,d_{r})$ whose points are the
equivalence classes of $X$ under the equivalence relation $\sim _r$,
and where the distance $d_{r}({\mathcal B},{\mathcal B'})$ between
two equivalence classes ${\mathcal B}$ and ${\mathcal B}' $ is defined to be
the maximal ultra-metric pointwisely less than or equal to 
$$W_X(\mathcal{B},\mathcal{B}'):= \min_{x
\in {\mathcal B}}\min _{x^{\prime} \in {\mathcal B}'} d_X
(x,x^{\prime}),$$
that is $d_r=\mathcal{U}(W_X)$, see (\ref{eq:U}). It follows from the definition of $\sim _r$ and Lemma \ref{prop:props-uaster}
that if two equivalence classes are distinct, then the distance
between them is $> r$. Indeed, for otherwise there would be $x,x'\in X$ with $[x]_r\neq [x']_r$ and $d_r([x]_r,[x']_r)\leq r$, which would imply that $W_X([x]_r,[x']_r)\leq r$, which is a contradiction. Hence $\mbox{sep}(X_r,d_r)>r.$

Write $\hfunc (X_r,d_{r})=(X_r,\theta_{X_r}^\hfunc)$. Since
$\mbox{sep}(X_r)>r$, hypothesis (III) now directly shows that the
blocks of the partition $\theta_{X_r}^\hfunc(r)$ are exactly the equivalence
classes of $X$ under the equivalence relation $\sim _r$, that is
$\theta_{X_r}^\hfunc(r) = \theta_X^{{ \mathrm{VR}}}(r)$. Finally, consider the morphism
$$\pi_r: (X,d_X)
\rightarrow (X_r,d_{r})$$ in $\underline{{\mathcal M}}^{gen}$ given on elements $x
\in X$ by $\pi_r (x) =[x]_r$, where $[x]_r$ denotes the equivalence class of
$x$ under $\sim_r$. By functoriality, $\hfunc(\pi_r):(X,\theta_X^{\hfunc})\rightarrow (X_r,\theta_{X_r}^\hfunc)$ is
persistence preserving, and therefore, $\theta^{\hfunc}(r)$ is a
refinement of $\theta_{X_r}(r)=\theta_X^{{ \mathrm{VR}}}(r)$. This is depicted as follows: 
\begin{displaymath}
\xymatrix{
(X,d_X) \ar[r]^\hfunc \ar[d]_{\pi_r} & (X,\theta_X^{{ \mathrm{VR}}}) \ar[d]^{\hfunc(\pi_r)} \\
(X_r,d_{r}) \ar[r]^{\hfunc} & (X_r,\theta_{X_r}^\mathfrak{H}) }
\end{displaymath}

This concludes the proof of (b).
\end{proof}

\subsubsection{Comments on Kleinberg's conditions}\label{sec:klein-conds}

\noindent We conclude this section by observing that analogues of  the three (axiomatic) 
properties considered by Kleinberg in \cite{kleinberg} hold for $\rfunc$.

\begin{itemize}
	 \item Kleinberg's {first condition} was {\em scale-invariance}, which
	 asserted that if the distances in the underlying point cloud
	 data were multiplied by a constant positive multiple
	 $\lambda$, then the resulting clustering decomposition should
	 be identical.  In our case, this is replaced by the condition (Recall notation of Example \ref{scaling})
	 that ${\mathfrak R}\compcirc\sigma _{\lambda}(X,d_X) =
	 s_{\lambda}\compcirc{\mathfrak R}(X,d_X)$, which is
	 trivially satisfied.

	 \item Kleinberg's {second condition}, {\em richness}, asserts that
	 any partition of a dataset can be obtained as the result of
	 the given clustering scheme for some metric on the
	 dataset. In our context, partitions are replaced by
	 persistent sets. Assume that there exist $t\in \R_+$
	 s.t. $\theta_X(t)$ is the single block partition, i.e., impose
	 that the persistent set is a dendrogram (cf. Definition
	 \ref{def:persistent-sets}).  In this case, it is easy to
	 check that any such persistent set can be obtained as ${\mathcal
	 R}^{gen}$ evaluated for some (pseudo)metric on some
	 dataset. Indeed,\footnote{We only prove triangle inequality.}
	 let $(X,\theta_X)\in{ob}(\underline{\mathcal P})$.  Let
	 $\epsilon_1,\ldots,\epsilon_k$ be the (finitely many)
	 transition/discontinuity points of $\theta_X$. For $x,x'\in X$
	 define $d_X(x,x')=\min\{\epsilon_i\}$ s.t. $x,x'$ belong to
	 same block of $\theta_X(\epsilon_i)$.  
	
	 This is a pseudo metric on $X$. Indeed, pick points $x,x'$
	 and $x''$ in $X$. Let $\epsilon_1$ and $\epsilon_2$ be
	 minimal s.t. $x,x'$ belong to the same block of
	 $\theta_X(\epsilon_1)$ and $x',x''$ belong to the same block of
	 $\theta_X(\epsilon_2)$. Let
	 $\epsilon_{12}:=\max(\epsilon_1,\epsilon_2)$. Since
	 $(X,\theta_X)$ is a persistent set (Definition
	 \ref{def:persistent-sets}), $\theta_X(\epsilon_{12})$ must have
	 a block $\mathcal B$ s.t. $x,x'$ and $x''$ all lie in
	 $\mathcal B$. Hence $d_X(x,x'')\leq \epsilon_{12}\leq
	 \epsilon_1+\epsilon_2=d_X(x,x')+d_X(x',x'')$.

	 \item Finally, Kleinberg's {third condition},
	 \emph{consistency}, could be viewed as a rudimentary example
	 of functoriality. His morphisms are similar to the ones in
	 $\underline{\mathcal M}^{gen}$.
\end{itemize}

\subsection{Functoriality over $\mmon$}

In this section, we illustrate how relaxing the functoriality permits
more clustering algorithms.  In other words, we will restrict ourselves
to $\mmon$ which is smaller (less stringent) than
$\mgen$ but larger (more stringent) than
$\miso$. 

Fir a finite metric space $(X,d_X)$ and $r\geq 0$. For $x \in X$, let $[x]_r$ be the equivalence class of $x$ under the
equivalence relation $\sim _r$, and define $c(x) = \big|[x]_r\big|$.  For any positive
integer $m$, we now define $X_m \subseteq X$ by $X_m = \{ x \in X;\ 
c(x) \geq m\} $.  We note that for any morphism $f: X \rightarrow Y$
in $\mmon$, we find that $f(X_m) \subseteq Y_m$.
This property clearly does not hold for more general morphisms.  For
every $r$, we can now define a new equivalence relation $\sim ^m_r $
on $X$, which refines $\sim _r$, by requiring that each equivalence
class of $\sim _r$ which has cardinality $\geq m$ is an equivalence
class of $\sim _r^m$, and that for any $x $ for which $c(x) < m$, $x$
defines a singleton equivalence class in $\sim ^m_r$. We now obtain a
new persistent set $(X,\theta_X^m )$, where $\theta_X ^m (r)$ will denote
the partition associated to the equivalence relation $\sim ^m_r$.  It
is readily checked that $X \rightarrow (X, \theta_X^m )$ is functorial
on $\mmon$.  

\begin{definition}
For each $m\in\N$ let $\hfunc^{m}:\mmon\rightarrow \catp$ be the functor which to each finite metric space $(X,d_X)$ assigns the persistent set $(X,\theta^m_X)$ where for each $r\geq 0$, $\theta_X^m(r)$ is the partition of $X$ into equivalence classes of $\sim_r^m$.
\end{definition} 

This scheme could be motivated by
the intuition that one does not regard clusters of small cardinality
as significant, and therefore makes points lying in small clusters
into singletons, where one can then remove them as representing
``outliers'' or ``noise''. 

Another construction arises from similar consideration to those that gave rise to the standard clustering algorithms $\cfunc^{\Delta_m(\delta)}$ in Section \ref{section:mmon-density}, cf. Corollary \ref{coro:change-metric}.

\begin{definition}
For each $m\in\N$ define the functor $\rfunc^{\Delta_m}:\mmon\rightarrow\catp$ given by 
$$\rfunc^{\Delta_m} = \rfunc\compcirc \tfunc^m,$$
where we have written $\tfunc^m = \tfunc^\Omega$ for $\Omega= \{\Delta_m(\delta),\ \delta\geq 0\}$, cf. (\ref{eq:Tfunc}).
\end{definition}

That this definition produces a functor follows directly from the fact that $\rfunc$ and $\tfunc^m$ are functorial.

The resulting construction is in the same spirit as DBSCAN \cite{dbscan}.
\begin{figure}[htb]
 	\centering
 	\includegraphics[width=1\linewidth]{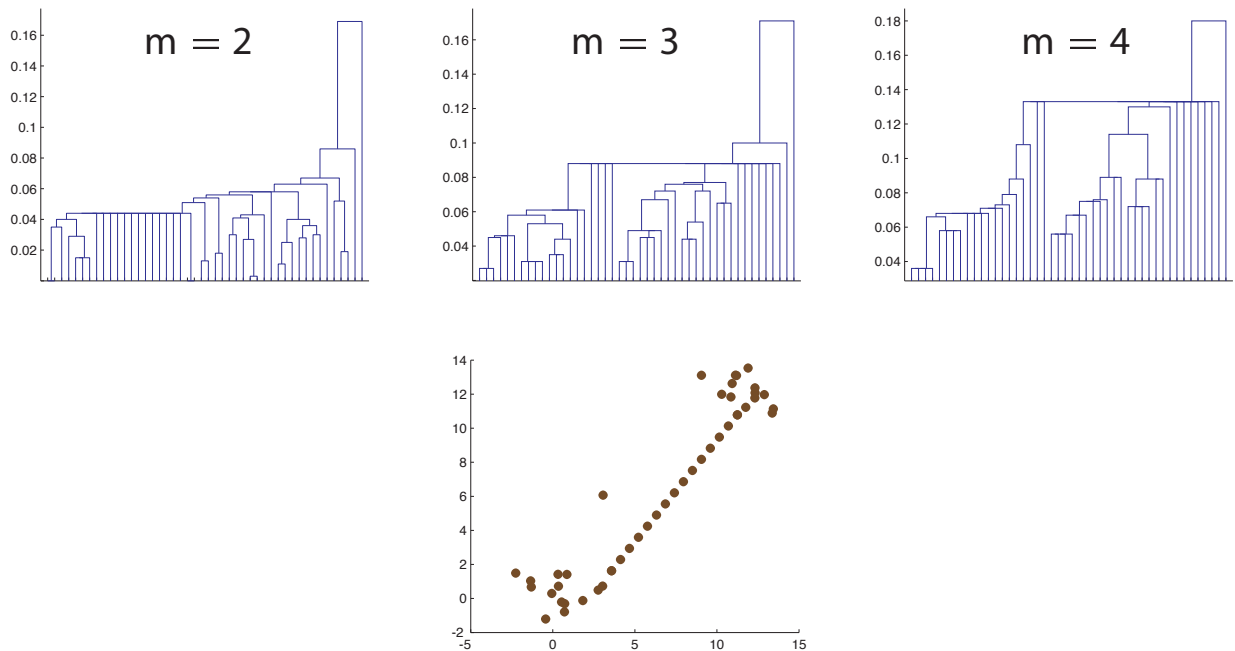}
 	\caption{Dendrograms resulting from applying $\rfunc^{\Delta_m}$ to a randomly generated dataset consisting of 50 points in the plane. The dataset consists of two clusters of points which are joined by a thin chain of points with some additional outliers. On the top row, from left to right we show the dendrograms corresponding to $m=2,3$ and $4$. Note how increasing $m$ produces dendrograms that exhibit more clearly clustered subsets.}  \label{fig:cDeltam}
 \end{figure}

\begin{remark} Notice first that $\rfunc^{\Delta_2}=\rfunc$. Also, 
for each $m\in\N$, the hierarchical clustering functor $\rfunc^{\Delta_m}$ is related to the standard clustering functor $\cfunc^{\Delta_m(\delta)}$ in the following manner. Fix $\delta\geq 0$ and write $\cfunc^{\Delta_m(\delta)}(X,d_X)=(X,P_X)$, then, $P_X=\theta_X^m(\delta).$
\end{remark}

An example of dendrograms obtained with these methods is shown in Figure \ref{fig:cDeltam}.



\section{Discussion} \label{various} 
The methods of this paper permit natural extensions to other situations,
such as  clustering of graphs and networks with, for example, the notion of clique clustering
\cite{hungarian-clique} fitting naturally into our context. It appears likely that from the point of view described
here, it will in many cases be possible, given a collection of
constraints on a clustering functor, to determine the {\em universal }
one satisfying the constraints.  One could therefore use sets of
constraints as the definition of clustering functors.

In addition to
producing classifications of clustering functors, functoriality is  a very
desirable property when clustering is used as an ingredient in
computational topology.  Functorial clustering schemes produce diagrams of
sets of varying shapes which can be used to construct simplicial complexes
which in turn reflect the topology of  connected components \cite{carlsson-09}.  Functorial schemes also can be used to construct ``zig-zag diagrams"
\cite{zigzag} which reflect the stability of
clusters produced over a family of independent samples.

We believe that the conceptual framework presented here can be a
useful tool in reasoning about clustering algorithms.  We have also
shown that clustering methods which have some degree of functoriality
admit the possibility of certain kind of qualitative geometric
analysis of datasets which can be quite valuable.  The general idea
that the morphisms between mathematical objects (together with the
notion of functoriality) are critical in many situations is
well-established in many areas of mathematics, and we would argue that
it is valuable in this statistical situation as well.

\end{document}